\documentclass{article}

\usepackage{microtype}
\usepackage{graphicx}
\usepackage{subcaption}
\usepackage{booktabs} 

\usepackage{tikz}
\usetikzlibrary{positioning}

\usepackage{hyperref}

\usepackage[accepted]{icml2026}

\usepackage{amsmath}
\usepackage{amssymb}
\usepackage{mathtools}
\usepackage{amsthm}

\usepackage[dvipsnames]{xcolor}

\usepackage{pifont}
\newcommand{\cmark}{\textcolor{MidnightBlue}{\ding{51}}}%
\newcommand{\xmark}{\textcolor{BrickRed}{\ding{55}}}%
\usepackage{arydshln}
\usepackage{colortbl}
\usepackage{multirow}
\usepackage{array}
\usepackage{wrapfig}
\usepackage{float}
\usepackage{makecell}  
\usepackage{cancel}

\def\rvx{{\mathbf{x}}}
\def\rvy{{\mathbf{y}}}
\def\rvc{{\mathbf{c}}}

\def\score{{\mathbf{s}_{\param}}}
\def\param{\boldsymbol{\theta}}

\def\efr{r^{\lambda}_t}

\usepackage[capitalize]{cleveref}

\theoremstyle{plain}

\theoremstyle{definition}

\theoremstyle{remark}

\usepackage{thm-restate}
\usepackage{thmtools}
\usepackage[textsize=tiny]{todonotes}
\usepackage{caption}

\begin{document}

\twocolumn[
  \icmltitle{Lookahead Sample Reward Guidance for Test-Time Scaling of Diffusion Models}



  \icmlsetsymbol{equal}{*}

  \begin{icmlauthorlist}
    \icmlauthor{Yeongmin Kim}{yyy}
    \icmlauthor{Donghyeok Shin}{yyy}
    \icmlauthor{Byeonghu Na}{yyy}
    \icmlauthor{Minsang Park}{yyy}
    \icmlauthor{Richard Lee Kim}{yyy}
    \icmlauthor{Il-Chul Moon}{yyy,qqq}
  \end{icmlauthorlist}

  \icmlaffiliation{yyy}{Korea Advanced Institute of Science and Technology (KAIST), Daejeon, Republic of Korea}
  \icmlaffiliation{qqq}{summary.ai}

  \icmlcorrespondingauthor{Yeongmin Kim}{alsdudrla10@kaist.ac.kr}
  \icmlcorrespondingauthor{Il-Chul Moon}{icmoon@kaist.ac.kr}

  \icmlkeywords{Machine Learning, ICML}

  \vskip 0.3in
]



\printAffiliationsAndNotice{}  

\begin{abstract}

Diffusion models have demonstrated strong generative performance; however, generated samples often fail to fully align with human intent. This paper studies an efficient test-time scaling method for sampling from regions with higher human-aligned reward values. Existing methods for computing the expected future reward (EFR) face important limitations: backward rollout incurs prohibitively high sampling costs, while Tweedie-based approaches, including Sequential Monte Carlo and gradient guidance, suffer from bias and inherent sampling issues. We show that the EFR at any $\rvx_t$ can be computed using only marginal samples from a pre-trained diffusion model, enabling closed-form reward guidance without neural backpropagation. To further improve efficiency, we introduce a few-step lookahead sampling and an accurate solver that guides particles toward high-reward lookahead samples. We refer to this sampling scheme as LiDAR sampling. LiDAR achieves the same GenEval performance as the latest gradient guidance method for SDXL with a 9.5$\times$ speedup. We release the code at \url{https://github.com/aailab-kaist/Diffusion-LiDAR-Sampling}.

\end{abstract}

\section{Introduction}
\setlength{\tabcolsep}{0.2pt}
\renewcommand{\arraystretch}{1.0}
\begin{table}[ht]
    \centering
\caption{
Comparison of test-time scaling methods for diffusion models.
\textit{Efficient-Rollout} indicates whether EFR estimation avoids per-timestep rollout sampling;
\textit{Finite i.i.d.} indicates whether performance remains consistent under a finite number of target sampling particles;
\textit{No-Taylor} indicates whether reward feedback can be obtained without Taylor approximation; and
\textit{No-BackPropagation} indicates whether neural differentiation is required.
}
    \vspace{0mm}
    \footnotesize
    \label{tab:main1}
    \begin{tabular}{lccccc}
        \toprule
        \multirow{1}{*}{Method}&\multirow{4}{*}{\makecell[c]{Backward\\Rollout\\{\tiny \cite{holderrieth2026glass}}\\{\tiny \cite{potaptchik2025tilt}}}}& \multirow{4}{*}{\makecell[c]{Sequential \\Monte Carlo\\{\tiny \cite{singhal2025a}}\\{\tiny \cite{li2025derivativefree}}}}  & \multirow{4}{*}{\makecell[c]{ Gradient \\ Guidance\\{\tiny~\cite{bansal2024universal}}\\{\tiny~\cite{na2025diffusion}}}}  & \multirow{2}{*}{\makecell[c]{LiDAR\\(Ours)}} \\
         &  &  & &  \\
        &  &  & &  \\
         &  &  & &  \\
        \midrule
        Effi.-Rollout&\xmark & \cmark & \cmark  & \cmark \\ 
        Finite i.i.d.  &\cmark & \xmark & \cmark & \cmark \\  
        No-Taylor  &\cmark & \xmark & \xmark  & \cmark \\
        No-BackPro. &\textcolor{BrickRed}{$\triangle$} & \cmark & \xmark  & \cmark  \\
        \bottomrule
    \end{tabular}
    \vspace{-0mm}
\end{table}
\vspace{0mm}
\setlength{\tabcolsep}{3pt}
\renewcommand{\arraystretch}{1.0}
Diffusion models~\cite{sohl2015deep,ho2020denoising,song2021scorebased} have recently emerged as a powerful paradigm for sample generation, demonstrating strong performance across a wide range of domains, including images~\cite{podell2024sdxl,esser2024scaling}, videos~\cite{ho2022video}, and language~\cite{lou2024discrete,nie2025large}. Despite these advances, the generated samples often fail to fully align with user intent. For instance, prompt–image misalignment remains a well-recognized challenge in text-to-image diffusion models~\cite{na2025diffusion}.

To address this issue, reward functions that capture human intent~\cite{xu2023imagereward,ma2025hpsv3} have been incorporated into generative modeling pipelines, either through fine-tuning~\cite{black2024training,liu2025improving,wu2024deep,kim2025preference} or via test-time scaling~\cite{pmlr-v202-kim23i,bansal2024universal,na2025diffusion,singhal2025a,li2025derivativefree}. Although these two approaches are orthogonal, test-time scaling methods have attracted particular attention because they can achieve substantial performance gains without additional training cost. However, these methods increase inference-time computation; thus, improving the performance gain per computation unit is a key factor in achieving better compute-optimal performance~\cite{snell2025scaling}.

 \begin{figure*}[t]
    \centering
    \begin{subfigure}{0.49\textwidth}
        \centering
        \centerline{\includegraphics[width=\textwidth]{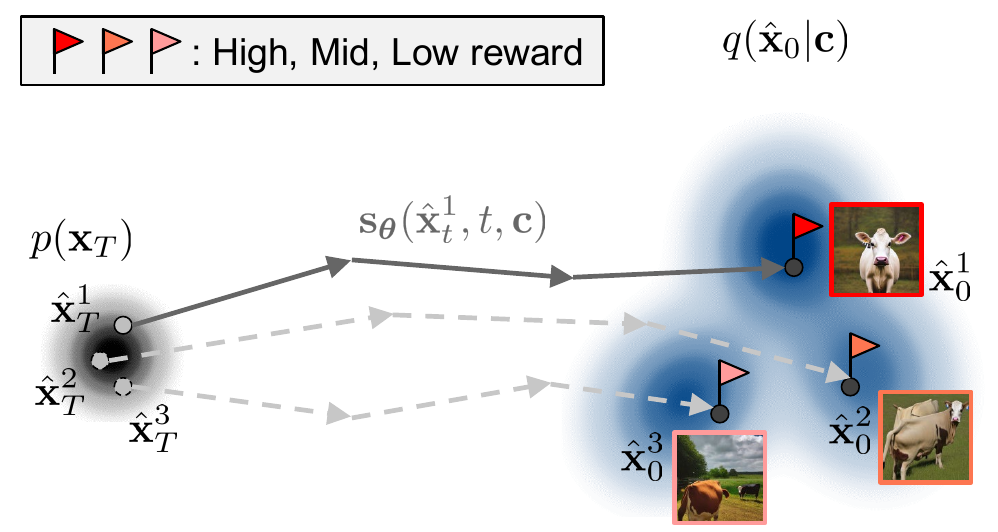}}
        \caption{Phase \ref{alg:1}: Lookahead sampling and reward annotation }
        \vspace{-0mm}
        \label{fig:overvew_lookahead}
    \end{subfigure}
    \hfill
    \begin{subfigure}{0.49\textwidth}
        \centering
        \centerline{\includegraphics[width=\textwidth]{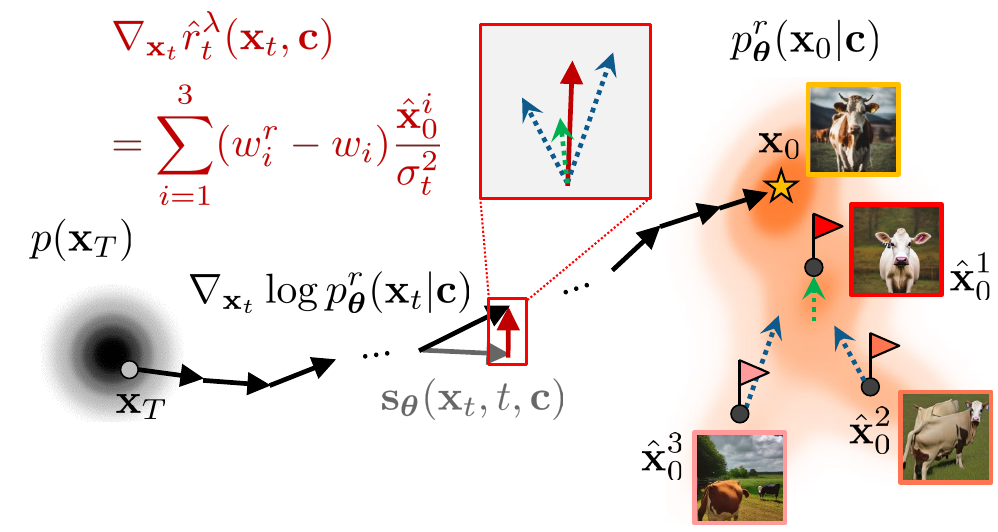}}
        \caption{Phase \ref{alg:2}: LiDAR sampling for target reward-tilted distribution}
        \vspace{-0mm}
        \label{fig:overvew_lidar}
    \end{subfigure}
   \caption{Overview of the proposed sampling framework from prompt $\rvc$. (a) To obtain marginal samples, we generate lookahead samples using a few-step solver and annotate them with a reward function. (b) To generate samples from the target reward-tilted distribution, LiDAR guides particles $\rvx_t$ toward high-reward lookahead sample $\hat{\rvx}_0^{1}$ and repels low-reward lookahead samples $\hat{\rvx}_0^{2}, \hat{\rvx}_0^{3}$. The guiding weight $w_i^r - w_i$ provided by each lookahead sample $\hat{\rvx}_0^{i}$ is proportional to its reward annotation value; see \cref{eq:empirical_guidance} for the definition.}
    \vspace{-0mm}
\end{figure*}

While rewards are typically assigned to the final generated sample $\rvx_0$, the target distribution is often defined as a reward-tilted distribution (i.e., $p_{\param}^{r}(\rvx_0)\propto p_{\param}(\rvx_0)\exp(r(\rvx_0))$). A major challenge in test-time scaling lies in computing the \textit{Expected Future Reward} (EFR) for an intermediate particle $\rvx_t$ and incorporating it into the iterative sampling process. Although directly computing the EFR through backward rollout is computationally expensive, recent methods leverage Taylor approximations to compute the EFR in a relatively efficient manner~\cite{chung2023diffusion,bansal2024universal,singhal2025a}. However, these approximations remain inaccurate and computationally expensive, as they induce neural dependencies between $\rvx_t$ and the EFR. In particular, gradient guidance~\cite{bansal2024universal,na2025diffusion} computes the target Stein score, which requires backpropagation through multiple neural networks. To avoid backpropagation through neural networks, approaches based on Sequential Monte Carlo (SMC) have also been proposed~\cite{singhal2025a,li2025derivativefree}. However, their performance is highly sensitive to the number of sampling particles of the target distribution, and since importance resampling is done in a high-dimensional image space, the generated samples tend to collapse to nearly identical outcomes~\cite{bengtsson2008curse} (See \Cref{fig:compare}).

This paper systematically addresses the previous limitations through a novel approach for approximating the EFR. The proposed formulation detaches the neural dependency between $\rvx_t$ and the EFR, thereby enabling the computation of the target Stein score in closed form. The proposed EFR formulation is expressed in terms of the final marginal samples of a pre-trained diffusion model and the forward perturbation kernel. Consequently, given a set of pre-generated outputs, both the EFR and the associated Stein score for arbitrary particles can be computed with negligible additional computational cost. The only modest additional cost arises from pre-generating marginal samples for each input prompt; therefore, we propose a lookahead sampling strategy (e.g., a 3-step ODE solver) for efficient marginal sampling, as shown in \Cref{fig:overvew_lookahead}. We annotate lookahead samples with reward values that guide particles toward high-reward lookahead samples during target sampling, as shown in \Cref{fig:overvew_lidar}. We refer to the resulting framework as \textit{\underline{L}ookahead Sample \underline{R}eward Gu\underline{ida}nce} (LiDAR) sampler, and its benefits are summarized in \cref{tab:main1}. LiDAR matches the GenEval performance of the latest gradient guidance method on SDXL while achieving a 9.5× reduction in inference time.

\section{Preliminaries}
\subsection{Diffusion Models}

Diffusion models~\cite{sohl2015deep,ho2020denoising} define a forward noising process that gradually perturbs a data instance $\rvx_0 \sim p_{\text{data}}(\rvx_0)$ into a noisy sample $\rvx_t$, where the perturbation kernel is given by:
\begin{align}
p(\rvx_t|\rvx_0)=\mathcal{N}(\rvx_t;\rvx_0,\sigma_t^2\mathbf{I}), \label{eq:forward_kernel}
\end{align}
for all $t \in [0, T]$, where $\sigma_t$ increases monotonically from $\sigma_0 = 0$ to $\sigma_T = \sigma_{\max}$.
The gradual perturbation induced by the forward process is also commonly represented by a stochastic differential equation (SDE)~\cite{song2021scorebased}:
\begin{align}
d\rvx_t = \mathbf{f}_{\sigma}(\rvx_t, t)dt + g_{\sigma}(t) d\mathbf{w}_t, \label{eq:forward_sde}
\end{align}
where $\mathbf{f}_{\sigma}$ and $g_{\sigma}$ denote drift and volatility functions corresponding to the noise schedule $\sigma_t$, and $\mathbf{w}_t$ denotes a standard Wiener process. \cref{eq:forward_sde} induces marginal distribution $p(\rvx_t)$. To generate samples, diffusion models estimate time-dependent \textit{Stein score} $\nabla_{\rvx_t}\log{p(\rvx_t)}\approx \score(\rvx_t,t)$, and construct a reverse process~\cite{anderson1982reverse} defined as:
\begin{align}
&d\rvx_t = \left[\mathbf{f}_{\sigma}(\rvx_t, t) -g^2_{\sigma}(t)\score(\rvx_t,t)\right] dt + g_{\sigma}(t) d\textbf{w}_t.\label{eq:reverse_sde}
\end{align}
Starting from a prior sample $\rvx_T \sim p(\rvx_T)$, the diffusion model generates a sample $\rvx_0$ by solving \cref{eq:reverse_sde}.

\begin{figure*}[t]
    \centering
    \includegraphics[height=3.75cm] {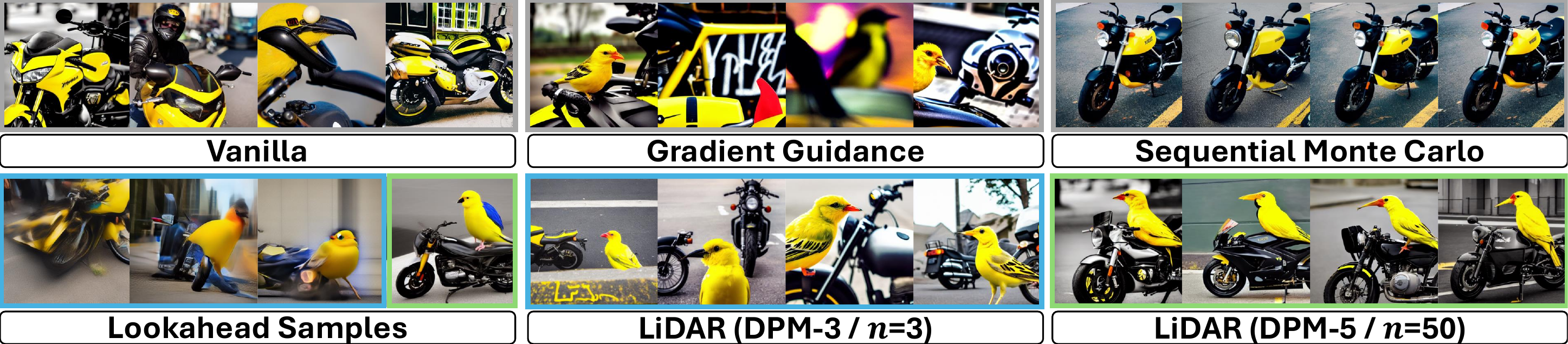}
    \vspace{-0mm}
    \caption{Sampling results for the prompt ``\textit{a photo of a yellow bird and a black motorcycle}.'' Among the lookahead samples, the blue boxes indicate all samples used for LiDAR (DPM-3, $n$=3), ordered from left to right by increasing reward values. The green box indicates the lookahead sample with the highest reward used for LiDAR (DPM-5, $n$=50).}
    \label{fig:compare}
    \vspace{-0mm}
\end{figure*}

\subsection{Test-time Scaling of Diffusion Models}
\subsubsection{Goal and Problem Definition}
Test-time scaling~\cite{snell2025scaling,muennighoff2025s1} methods focus on improving model performance by allocating additional computational resources during inference. A key consideration is how performance gains scale with the added computational cost, including memory usage and inference speed. 

As this paradigm is commonly used in real-world generative services, we primarily focus on text-to-image (T2I) diffusion models~\cite{rombach2022high,podell2024sdxl}, which constitute the most widely adopted application domain of diffusion models. In T2I diffusion models, text $\rvc$ is used as a conditioning signal for the score network, i.e., $\score(\rvx_t, t, \rvc) := \nabla_{\rvx_t} \log p_{\param}(\rvx_t \mid \rvc)$.\footnote{Note that commercial text-to-image diffusion models typically operate in latent space; for simplicity, we denote the diffusion variable as $\rvx_0$ throughout this paper.} T2I models generate samples from $p_{\param}(\rvx_0|\rvc)$ by solving \cref{eq:reverse_sde}. We assume access to a pre-trained T2I diffusion model $\score$.

\textbf{Problem definition:} The goal of test-time scaling is to generate samples that are better aligned with human intent. Let
$ r:\mathcal{X}_0 \times \mathcal{C} \rightarrow \mathbb{R}$
denote a reward function that reflects human intent, with $\mathcal{X}_0$ being the domain of the final sample $\rvx_0$ and $\mathcal{C}$ the domain of the text prompt $\rvc$. The reward function $r$ could be human preference scores or a differentiable neural reward model~\cite{xu2023imagereward}. To achieve this goal, recent approaches~\cite{bansal2024universal,singhal2025a} focus on sampling from a reward-tilted distribution, defined as:
\begin{align}
p_{\param}^{r}(\rvx_0|\rvc) \propto p_{\param}(\rvx_0|\rvc)\text{exp}\left(\lambda \cdot r(\rvx_0,\rvc)\right), \label{eq:episode_target}
\end{align}
where $\lambda$ is a hyperparameter that controls the strength of the reward. To generate samples from the resulting target distribution $p_{\param}^{r}$, we need to compute the corresponding time-dependent \textit{target Stein score}:
\begin{align}
\nabla_{\rvx_t}&\log{p_{\param}^{r}(\rvx_t|\rvc)}=\label{eq:process_target}\\ 
&\score(\rvx_t,t,\rvc)+\nabla_{\rvx_t}\underbrace{\log\mathbb{E}_{p_{\param}(\rvx_0|\rvx_t,\rvc)}\left[ \text{exp}\left(\lambda \cdot r(\rvx_0,\rvc)\right)\right]}_{\text{\small~$:=r^{\lambda}_t(\rvx_t, \rvc)$ (Expected Future Reward)}}, \nonumber
\end{align}
where the derivation of \cref{eq:process_target} is provided in \cref{sec:A.1}. Given an intermediate particle $\rvx_t$ during inference, direct computation of the \textit{Expected Future Reward} (EFR) $\efr$ is generally intractable; consequently, several approximation methods have been studied.

\subsubsection{Computation of Expected Future Reward}
\label{sec:2.2.2}
This section describes prior methods for computing the $\efr$.

\textbf{Backward rollout:}  
The first approach is to sample from $\rvx_t$ to $\rvx_0$ multiple times and estimate the result using the following formula:
\begin{align}
 \efr(\rvx_t,\rvc)\approx \log \frac{1}{n}\sum_{i=1}^{n}\text{exp}\left(\lambda \cdot r(\rvx_0^i,\rvc)\right), \label{eq:naive_mc}
\end{align}
where $\rvx_0^i\sim p_{\param}(\rvx_0|\rvx_t,\rvc)$. However, each
sample $\rvx_0^i$ requires iterative denoising by solving
\cref{eq:reverse_sde}. Since this procedure must be repeatedly performed for all $t \in [0, T]$, it becomes computationally infeasible. Concurrent works~\cite{holderrieth2026glass,potaptchik2025tilt,potaptchik2026meta} attempt to reduce computational cost, but rollout sampling still needs to be performed at every timestep.


\textbf{Talyer approximation with Tweedie's formula:}
This approach employs a first-order Taylor approximation~\cite{chung2023diffusion,bansal2024universal,na2025diffusion}:
\begin{align}
\efr(\rvx_t,\mathbf{c}) \approx \lambda \cdot r(\bar{\rvx}_0,\rvc), \label{eq:tweedie}
\end{align}
where $\bar{\rvx}_0 := \mathbb{E}_{p_{\param}(\rvx_0|\rvx_t,\rvc)}[\rvx_0]$ is given by Tweedie’s formula~\cite{robbins1992empirical,efron2011tweedie}. This approximation is relatively practical for test-time scaling, as it requires only a single computation pass through the score network, the decoder (in latent diffusion models), and the reward function $r$ at each $t \in [0, T]$. Both gradient guidance and SMC baselines rely on this approximation. However, rewards are typically well defined only for the iteratively denoised sample  $\rvx_0$, and evaluating them on $\bar{\rvx}_0$ can be inaccurate, potentially leading to suboptimal performance. As shown in \Cref{sec:A.5} and \Cref{fig:7}, the Taylor approximation error increases proportionally with $\lambda$; in other words, stronger reward signals lead to larger errors.

\subsubsection{Sampling with Expected Future Reward}
This section describes the sampling methods based on the approximated expected future reward in \cref{eq:tweedie}.

\textbf{Gradient guidance~\cite{bansal2024universal,na2025diffusion}}:  This sampling method approximates the Stein score of $p_{\param}^r$ in \cref{eq:process_target} as follows:
\begin{align}
\score(\rvx_t,t,\rvc) +\lambda \cdot \nabla_{\rvx_t} r(\bar{\rvx}_0,\rvc). \label{eq:grad_guide}
\end{align}
This approach has several limitations. First, the inherited approximation of \cref{eq:tweedie} can introduce errors, leading to inaccurate reward signals. Second, because gradients are taken directly with respect to $\rvx_t$, the method requires the reward function to be differentiable (i.e., it only supports neural reward models) and is therefore susceptible to reward hacking~\cite{skalse2022defining}. Finally, a significant drawback is its computational cost: computing $r(\bar{\rvx}_0, \rvc)$ requires sequentially passing $\rvx_t$ through the $\score$, the decoder, and the reward model $r$, followed by backpropagation with respect to $\rvx_t$, making the approach computationally expensive.

\textbf{Sequential Monte Carlo (SMC)~\cite{singhal2025a,li2025derivativefree}:}
To avoid neural gradient computation, SMC methods reformulate \cref{eq:process_target} into a distribution formulation and apply approximation in \cref{eq:tweedie} as:
\begin{align}
    p_{\param}^r(\rvx_t|\rvc) &\propto p_{\param}(\rvx_t|\rvc) \mathbb{E}_{p_{\param}(\rvx_0|\rvx_t,\rvc)}\left[ \text{exp}\left(\lambda \cdot r(\rvx_0,\rvc)\right)\right] \label{eq:process_target_prob}\\
    &\approx p_{\param}(\rvx_t|\rvc) \exp(\lambda\cdot r(\bar{\rvx}_0,\rvc)).
\end{align}
SMC generates multiple particles $\rvx_t^i \sim p_{\param}(\rvx_t | \rvc)$, for $i \in \{1, \ldots, N\}$, in parallel from a pre-trained diffusion model, and performs importance resampling with weights proportional to $\exp(\lambda \cdot r(\bar{\rvx}_0, \rvc))$. Unlike gradient guidance, this approach leverages interactions among multiple particles. Consequently, its performance is highly sensitive to the number of generating particles $N$. Moreover, as shown in \Cref{fig:compare}, the method tends to collapse to a single particle with the highest expected reward, significantly reducing sample diversity, a well-known limitation of particle filtering in high-dimensional data space~\cite{bengtsson2008curse}.

\section{Method} 

Directly solving \cref{eq:reverse_sde} with the target Stein score in \cref{eq:process_target} constitutes a potentially more promising sampling strategy than the SMC approach in \cref{eq:process_target_prob}, as it yields consistent performance gains regardless of the number of target samples generated. Accordingly, our primary goal is to overcome the limitations of gradient guidance described in \cref{eq:grad_guide} by proposing a new approximation method for the expected future reward (EFR) $\efr$ defined in \cref{eq:process_target}.

Specifically, \cref{sec:3.1} reformulates $\efr$ using only future marginal samples and the forward perturbation kernel, which removes the dependence of the intermediate particle $\rvx_t$ on the neural networks. \cref{sec:3.2} introduces an efficient lookahead sampling method for obtaining future marginal samples and presents its weak-to-strong interpretation. \cref{sec:3.3} derives a closed-form guidance formulation that avoids backpropagation through neural networks. Finally, \cref{sec:3.4} presents scaling laws with respect to lookahead accuracy and the number of lookahead samples.

\begin{figure*}[t]
    \centering
    \begin{minipage}{0.33\textwidth}
        \centering
        \includegraphics[width=\linewidth]{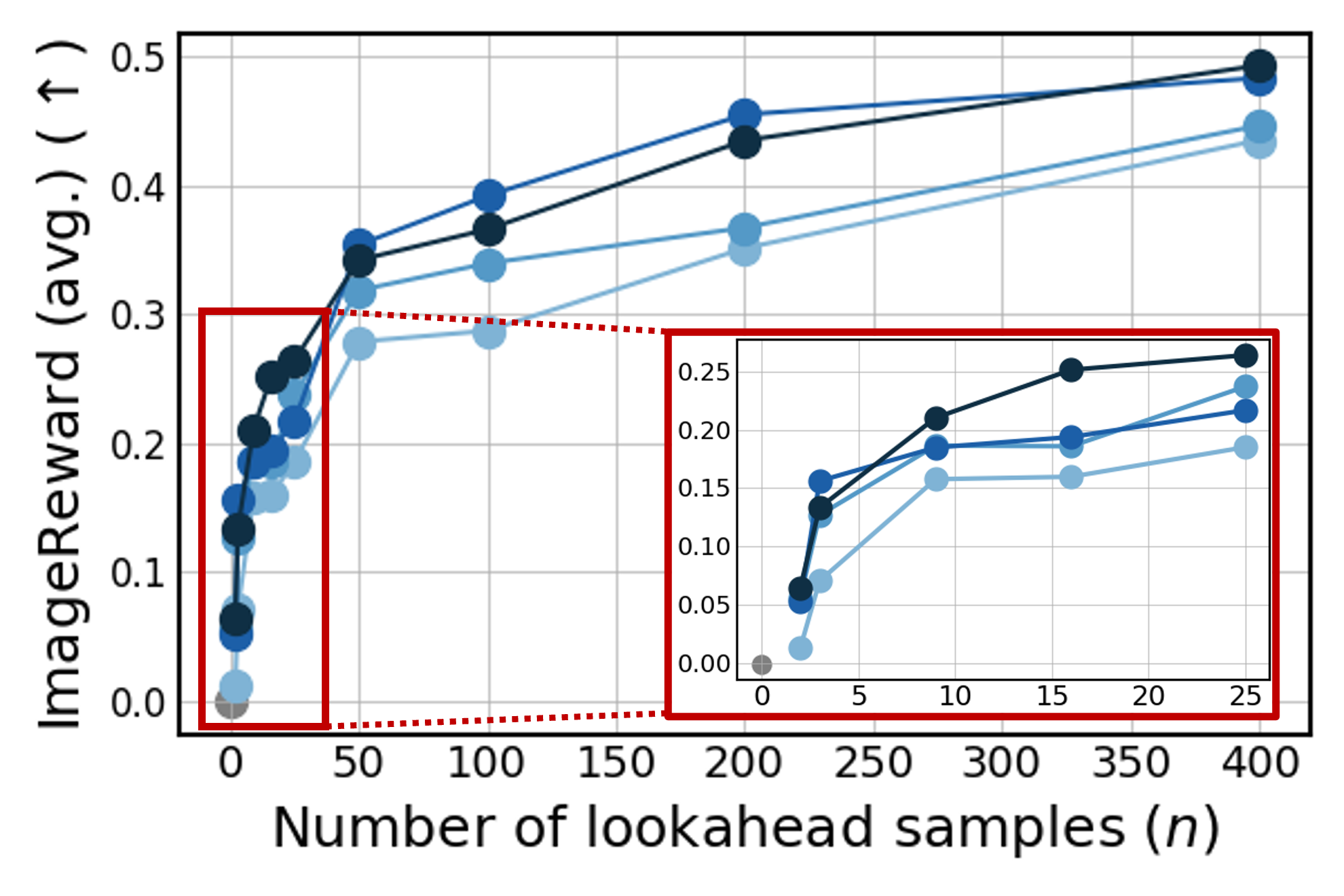}
    \end{minipage}
    \hfill
    \begin{minipage}{0.33\textwidth}
        \centering
        \includegraphics[width=\linewidth]{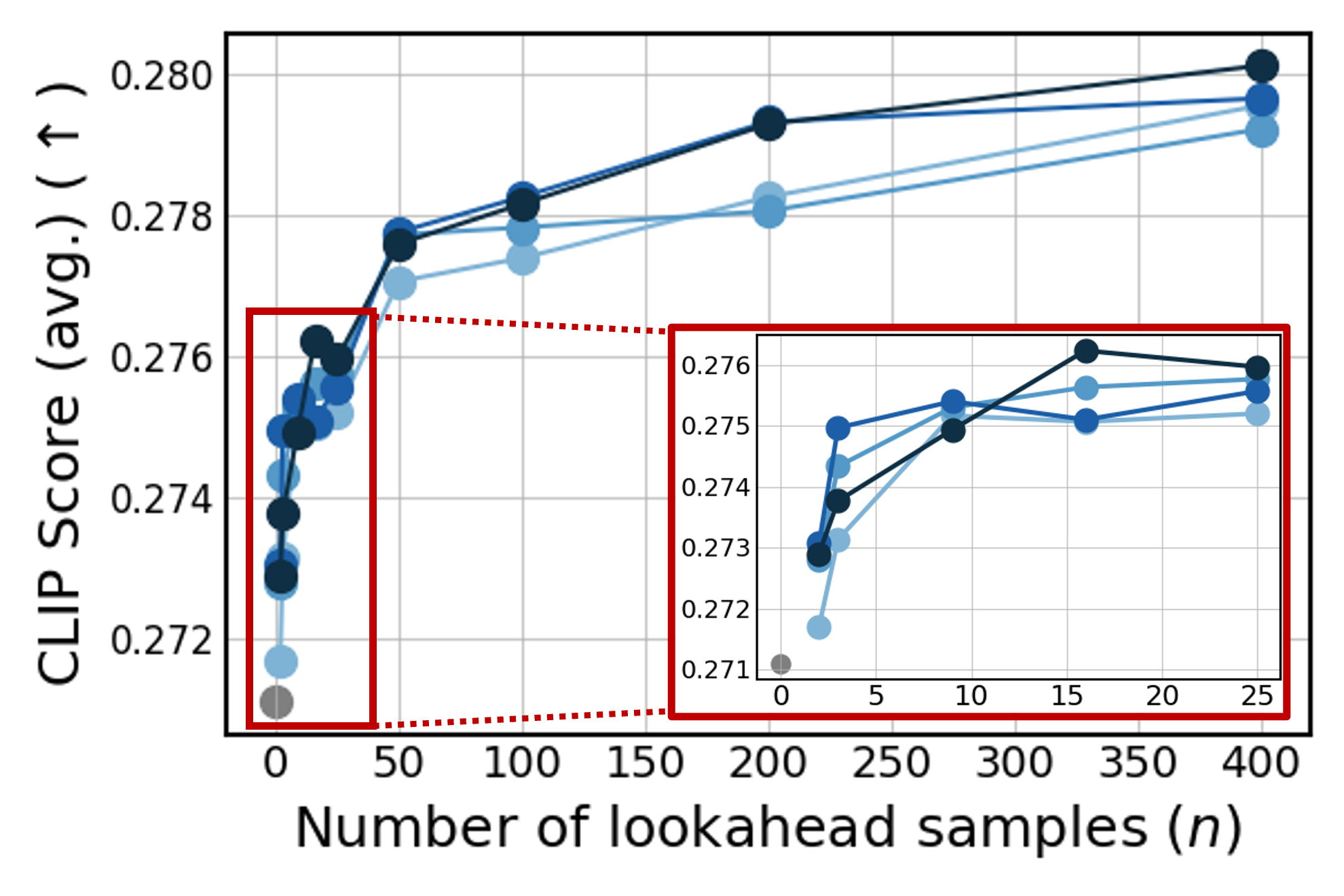}
    \end{minipage}
    \hfill
    \begin{minipage}{0.33\textwidth}
        \centering
        \includegraphics[width=\linewidth]{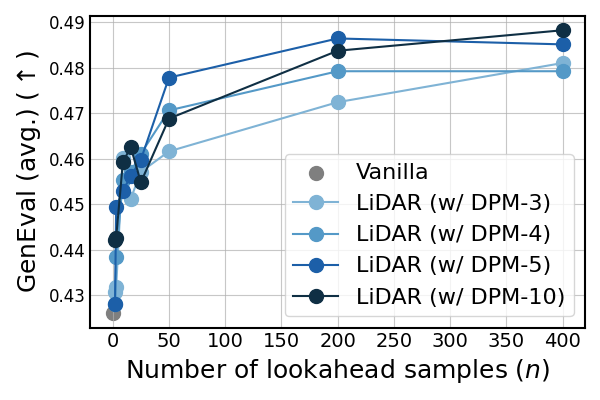}
    \end{minipage}
    \vspace{-0mm}
    \caption{Scaling behavior of the LiDAR sampler under different lookahead strategies and varying numbers of lookahead samples $n$. The vanilla method uses only the Stein score $\score$, whereas LiDAR (w/ DPM-$\delta$) incorporates lookahead samples with $\delta$ discretization steps. All results are obtained using SD v1.5 with ImageReward annotations. The legend is shared across all three panels.}
    \label{fig:1}
    \vspace{-3mm}
\end{figure*}

\subsection{Expected Future Reward from Future Marginal}
\label{sec:3.1}

The following theorem provides our EFR reformulation:
\begin{restatable}{theorem}{thma} \label{thma} The expected future reward $\efr(\rvx_t,\rvc)$ defined in \cref{eq:process_target} can be expressed as 
{\small\begin{align}
 \log{\mathbb{E}_{p_{\param}(\rvx_0|\rvc)}\left[\frac{p(\rvx_t|\rvx_0)}{\mathbb{E}_{p_{\param}(\rvx_0|\rvc)}[p(\rvx_t|\rvx_0)]} \exp\left(\lambda \cdot r(\rvx_0,\rvc)\right)\right]}. \label{eq:efr_refor}
\end{align}
}
\end{restatable}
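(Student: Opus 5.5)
The argument is a direct application of Bayes' rule to the posterior $p_{\param}(\rvx_0|\rvx_t,\rvc)$ inside the definition of $\efr$ in \cref{eq:process_target}. The key structural fact is that the forward process in \cref{eq:forward_kernel} does not depend on $\rvc$ or on $\param$: the joint law generated by the pre-trained model factorizes as $p_{\param}(\rvx_0,\rvx_t|\rvc)=p_{\param}(\rvx_0|\rvc)\,p(\rvx_t|\rvx_0)$. Here $p_{\param}(\rvx_t|\rvc)$ is understood as the marginal obtained by perturbing the model's final samples with the kernel. This is the same convention already used implicitly when deriving \cref{eq:process_target} and \cref{eq:process_target_prob}, where $p_{\param}(\rvx_t|\rvc)$ and $p_{\param}(\rvx_0|\rvx_t,\rvc)$ come from one joint.

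The proof proceeds in three steps.

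\textbf{Step 1: marginal.} Write
\begin{align*}
p_{\param}(\rvx_t|\rvc)=\int p_{\param}(\rvx_0|\rvc)\,p(\rvx_t|\rvx_0)\,d\rvx_0=\mathbb{E}_{p_{\param}(\rvx_0|\rvc)}[p(\rvx_t|\rvx_0)].
\end{align*}
This quantity is strictly positive because the Gaussian kernel is positive for $t>0$.

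\textbf{Step 2: posterior.} By Bayes' rule,
\begin{align*}
p_{\param}(\rvx_0|\rvx_t,\rvc)=\frac{p_{\param}(\rvx_0|\rvc)\,p(\rvx_t|\rvx_0)}{\mathbb{E}_{p_{\param}(\rvx_0'|\rvc)}[p(\rvx_t|\rvx_0')]}.
\end{align*}

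\textbf{Step 3: change of measure.} Substitute the posterior into $\mathbb{E}_{p_{\param}(\rvx_0|\rvx_t,\rvc)}[\exp(\lambda r(\rvx_0,\rvc))]$ and rewrite the integral as an expectation under $p_{\param}(\rvx_0|\rvc)$. The integrand becomes the likelihood ratio $p(\rvx_t|\rvx_0)/\mathbb{E}[p(\rvx_t|\rvx_0)]$ times $\exp(\lambda r)$. Taking the logarithm gives \cref{eq:efr_refor}. The normalizer is constant in the inner integration variable, so it can be placed inside or outside the expectation. One should use a dummy variable $\rvx_0'$ in the denominator to avoid clashing with the outer integration variable, since the statement reuses the symbol $\rvx_0$.

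The only real obstacle is justifying that factorization. For the exact reverse SDE, \cref{eq:reverse_sde}, whose score matches the forward process, the time-$t$ marginal and the conditional of $\rvx_0$ given $\rvx_t$ are consistent with $p_{\param}(\rvx_0|\rvc)$ pushed through the forward kernel. This is the standard time-reversal property. For a learned $\score$ this holds by definition, provided $p_{\param}(\cdot|\rvc)$ is taken to be the joint induced by forward-noising the model's outputs. I would state that convention explicitly, remark that it is consistent with how $p_{\param}(\rvx_0|\rvx_t,\rvc)$ is used in \cref{eq:process_target}, and note that integrability of $\exp(\lambda r)$ is assumed so that all expectations are finite.
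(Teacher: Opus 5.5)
Your proposal is correct and follows the paper's proof essentially line by line: the paper also writes $p_{\param}(\rvx_0|\rvx_t,\rvc)$ as $p(\rvx_t|\rvx_0)\,p_{\param}(\rvx_0|\rvc)$ divided by the marginal $\int p(\rvx_t|\rvx_0)\,p_{\param}(\rvx_0|\rvc)\,d\rvx_0$, then rereads the resulting integral as an expectation under $p_{\param}(\rvx_0|\rvc)$ before taking the logarithm. Your explicit statement of the convention that the joint factorizes through a forward kernel independent of $\rvc$ and $\param$ is a useful addition, since the paper relies on it only implicitly (as in \cref{sec:A.1}).
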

Please refer to \cref{sec:A.2} for the proof. In the previous formulations in \cref{eq:naive_mc,eq:tweedie}, the current particle $\rvx_t$ is fed into neural networks to compute the EFR; consequently, obtaining the Stein score by differentiating with respect to $\rvx_t$ requires backpropagation through the neural networks. 

In contrast, under our formulation in \cref{eq:efr_refor}, $\rvx_t$ is no longer used as an input to the neural models. Previously, $\rvx_t$ served as an input to $\score$ and subsequently $r$. However, \cref{eq:efr_refor} indicates that $\rvx_t$ is not used as an input for both $\score$ and $r$ because all samples for the expectation are drawn solely from $\rvc$, not from $\rvx_t$. For the expectation computation, the EFR is computed using marginal samples from $p_{\param}(\rvx_0 | \rvc)$, and the dependence between $\rvx_t$ and $\rvx_0$ is mediated through the forward perturbation kernel $p(\rvx_t | \rvx_0)$ defined in \cref{eq:forward_kernel}. We call this rollout method the \textit{forward rollout}.

\setlength{\textfloatsep}{5pt}
\setlength{\floatsep}{5pt}
\setlength{\intextsep}{5pt}
\begin{algorithm}[t]
\caption{Lookahead sampling and reward annotation}
\label{alg:1}
\begin{tabular}{l}
\textbf{Input:} $\score, \rvc, \delta, n, r$ \\[0.3em]
1.\ Sample $\hat{\rvx}_T^{i} \sim p(\rvx_T)$ for $i \in \{1, \ldots, n\}$ \\
2.\ Obtain $\hat{\rvx}_0^{i}$ from $\hat{\rvx}_T^{i}$ via \cref{eq:reverse_sde}\\
\quad \quad using $\score(\cdot,\cdot,\rvc)$ with $\delta$ discretization steps \\
3.\ $r_i \leftarrow r(\hat{\rvx}^i_0,\rvc)$ for $i \in \{1, \ldots, n\}$ \\[0.3em]
\textbf{Output:} $\{\hat{\rvx}_0^i,r_i\}_{i=1}^{n}$
\end{tabular}
\end{algorithm}
\begin{algorithm}[t]
\caption{LiDAR sampling (Target sampling)}
\label{alg:2}
\begin{tabular}{l}
\textbf{Input:} $\score, \{\hat{\rvx}_0^i,r_i\}_{i=1}^{n}, s, \tau $ \\[0.3em]
1.\ Sample $\rvx_T \sim p(\rvx_T)$ \\
2.\ $\rvx_{t_{\tau}} \leftarrow \rvx_T$ \\
\textbf{for $k=\{\tau, \ldots, 1\}$ do:} \\
\quad 3.\ Compute $\nabla_{\rvx_t} \hat{r}^{\lambda}_t(\rvx_{t_{k}},\rvc)$ via \cref{eq:empirical_guidance} with $r_i$\\
\quad 4.\ Obtain  $\rvx_{t_{k-1}}$ from $\rvx_{t_k}$ via \cref{eq:reverse_sde} using \\
\quad \quad \quad $\score(\rvx_{t_k},t_k,\rvc)+s\cdot\nabla_{\rvx_t} \hat{r}^{\lambda}_t(\rvx_{t_{k}},\rvc)$ \\
5.\ $\rvx_{0} \leftarrow \rvx_{\tau_{0}}$ \\[0.3em]
\textbf{Output:} $\rvx_0$
\end{tabular}
\end{algorithm}

\subsection{Efficient Lookahead Sampling for Forward Rollout}
\label{sec:3.2}

Pre-generating samples for arbitrary prompts $\rvc$ from the pre-trained diffusion model $p_{\param}(\rvx_0 | \rvc)$ introduces an additional cost in \cref{eq:efr_refor}. 
To further reduce this cost, we alternate sampling from $p_{\param}(\rvx_0 | \rvc)$ with a faster generator $q(\rvx_0 | \rvc)$, such as a few-step solver~\cite{lu2022dpm} or a distilled model~\cite{song2023consistency}. \Cref{alg:1} illustrates the lookahead sampling procedure using a $\delta$-step solver.
\begin{restatable}{definition}{defa} \label{defa} We define \textit{lookahead reward} $\tilde{r}^{\lambda}_t(\rvx_t,\rvc)$ as:
{\small
\begin{align}
 \log{\mathbb{E}_{q(\rvx_0|\rvc)}\left[\frac{p(\rvx_t|\rvx_0)}{\mathbb{E}_{q(\rvx_0|\rvc)}[p(\rvx_t|\rvx_0)]} \exp\left(\lambda \cdot r(\rvx_0,\rvc)\right)\right]}. \label{eq:lookahead_reward}
\end{align}
}
\end{restatable}
We approximate the target Stein score in \cref{eq:process_target} with a \textit{lookahead Stein score}:
\begin{align}
\score(\rvx_t,t,\rvc) + \nabla_{\rvx_t} \tilde{r}^{\lambda}_t(\rvx_t,\rvc). \label{eq:ours}
\end{align}
We denote $\tilde{p}_{\param}^r(\rvx_0|\rvc)$ as the resulting distribution by solving \cref{eq:reverse_sde} from $\rvx_T \sim p(\rvx_T)$, with the lookahead Stein score.

\textbf{Weak-to-Strong interpretation:} The proposed lookahead Stein score can be interpreted through the lens of weak-to-strong generalization~\cite{10.5555/3692070.3692266,liutuning,zhu2025weaktostrong}, where the reward signal produced by a weak lookahead sampler is transferred to a stronger target sampler. In particular, the proposed lookahead Stein score in \cref{eq:ours} is equivalent (for $s=1$) to
\begin{align}
\score(\rvx_t,t,\rvc) + s \cdot \nabla_{\rvx_t} \log \frac{q^{r}(\rvx_t \mid \rvc)}{q(\rvx_t \mid \rvc)}, \label{eq:weak_to_strong}
\end{align}
where $q^{r}(\rvx_t | \rvc) \propto q(\rvx_t | \rvc)
\mathbb{E}_{q(\rvx_0 | \rvx_t, \rvc)}
\left[\exp(\lambda\cdot r(\rvx_0,\rvc))\right]$
is defined analogously to \cref{eq:process_target_prob}, and $q(\rvx_t | \rvc)$ denotes the marginal distribution induced by $q(\rvx_0 | \rvc)$ and the forward kernel in \cref{eq:forward_kernel}. From this perspective, adjusting the lookahead reward by a flexible scalar weighting factor $s$ $(>0)$ is commonly adopted~\cite{zhu2025weaktostrong}, and we similarly adopt this weighting to broaden our applicability.

\subsection{Closed-form Derivative Free Guidance}
\label{sec:3.3}
We compute the lookahead reward in \cref{eq:lookahead_reward} with finite samples, yielding \textit{empirical lookahead reward} $\hat{r}^{\lambda}_t(\rvx_t,\rvc):=$
\begin{align}
 \log{\frac{1}{n}\sum_{i=1}^{n}\left[\frac{p(\rvx_t|\hat{\rvx}^{i}_0)}{\frac{1}{n}\sum_{j=1}^{n}p(\rvx_t|\hat{\rvx}^{j}_0)} \exp\left(\lambda \cdot r(\hat{\rvx}^i_0,\rvc)\right)\right]}, \label{eq:lookahead_finite_n}
\end{align}
where $\hat{\rvx}_0^{i} \sim q(\rvx_0 | \rvc)$ for $i \in \{1, \ldots, n\}$. The previous backward rollout approach in \cref{eq:naive_mc} requires separate samples for each $t$ and introduces a sequential neural dependency between $\rvx_t$ and $\rvx_0$. In contrast, \cref{eq:lookahead_finite_n} requires only marginal samples at $t=0$, and the forward kernel enables computing the EFR for any $\rvx_t$ without neural dependency between $\rvx_t$ and $\hat{\rvx}_0$. As a result, the Stein score of the empirical lookahead reward admits a closed form.

\begin{restatable}{theorem}{propa} \label{propa} (Derivative-free guidance) Gradient of empirical lookahead reward $\nabla_{\rvx_t}\hat{r}^{\lambda}_t(\rvx_t,\rvc)$ has closed-form expression without neural gradient operation as: 
\begin{align}
\sum_{i=1}^{n}(w_i^r - w_i) \frac{\hat{\rvx}_0^i}{\sigma_t^2}, \label{eq:empirical_guidance}
\end{align}
where the weighting functions are:
{\small
\begin{align}
& w^r_i:=\mathrm{Softmax}\left(\left\{\lambda \cdot r(\hat{\rvx}^j_0,\rvc)
- \frac{\|\rvx_t-\hat{\rvx}_0^j\|^2}{2\sigma_t^2}\right\}_{j=1}^n\right)_i, \\
& w_i:=\mathrm{Softmax}\left(\left\{
- \frac{\|\rvx_t-\hat{\rvx}_0^j\|^2}{2\sigma_t^2}\right\}_{j=1}^n\right)_i.
\end{align}
}
\end{restatable}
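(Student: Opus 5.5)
The plan is to simplify the empirical lookahead reward in \cref{eq:lookahead_finite_n} into a difference of two log-sum-exp terms, and then differentiate each term using the Gaussian form of the forward kernel in \cref{eq:forward_kernel}.

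First, the factors $\frac{1}{n}$ cancel, and the denominator $\frac{1}{n}\sum_j p(\rvx_t|\hat{\rvx}_0^j)$ does not depend on $i$, so it can be pulled out of the sum. This gives
\begin{align*}
\hat{r}^{\lambda}_t(\rvx_t,\rvc) = \log\sum_{i=1}^n p(\rvx_t|\hat{\rvx}_0^i)\exp\left(\lambda \cdot r(\hat{\rvx}^i_0,\rvc)\right) - \log\sum_{j=1}^n p(\rvx_t|\hat{\rvx}_0^j).
\end{align*}
Next, substitute $p(\rvx_t|\hat{\rvx}_0^i)=(2\pi\sigma_t^2)^{-d/2}\exp(-\|\rvx_t-\hat{\rvx}_0^i\|^2/(2\sigma_t^2))$. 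The normalizing constant is common to both terms and cancels in the difference. Define $a_i := \lambda \cdot r(\hat{\rvx}^i_0,\rvc) - \|\rvx_t-\hat{\rvx}_0^i\|^2/(2\sigma_t^2)$ and $b_i := -\|\rvx_t-\hat{\rvx}_0^i\|^2/(2\sigma_t^2)$. Then $\hat{r}^{\lambda}_t = \mathrm{LSE}(a) - \mathrm{LSE}(b)$.

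The key point is that the rewards $r(\hat{\rvx}^i_0,\rvc)$ and the samples $\hat{\rvx}_0^i$ are fixed constants with respect to $\rvx_t$. Only the quadratic terms depend on $\rvx_t$, so no neural gradient is needed. By the chain rule, $\nabla \mathrm{LSE}(a)=\sum_i \mathrm{Softmax}(a)_i \nabla a_i$, which gives $\sum_i w_i^r \nabla a_i$, and likewise the second term gives $\sum_i w_i \nabla b_i$. Both gradients are the same: $\nabla_{\rvx_t} a_i = \nabla_{\rvx_t} b_i = -(\rvx_t-\hat{\rvx}_0^i)/\sigma_t^2$.

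Combining the two terms,
\begin{align*}
\nabla_{\rvx_t}\hat{r}^{\lambda}_t = \sum_i (w_i^r - w_i)\frac{\hat{\rvx}_0^i}{\sigma_t^2} - \frac{\rvx_t}{\sigma_t^2}\sum_i (w_i^r - w_i).
\end{align*}
The last step is to note that both weight vectors are softmax outputs and so each sums to one. Hence $\sum_i (w_i^r - w_i)=0$, the $\rvx_t$ term vanishes, and we obtain exactly \cref{eq:empirical_guidance}.

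The only step needing care is the cancellation of the $\rvx_t$-dependent term, which relies on both weight families being normalized. The rest is routine calculus. We assume $\sigma_t>0$, i.e.\ $t>0$, so that the Gaussian kernel is well defined.
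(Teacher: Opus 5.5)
Your proposal is correct and follows essentially the same route as the paper's proof. Both cancel the Gaussian normalizers and split $\hat{r}^{\lambda}_t$ into a difference of two log-sum-exp terms, then differentiate each into softmax-weighted sums of $(\hat{\rvx}_0^i-\rvx_t)/\sigma_t^2$ and discard the $\rvx_t$ term. Your explicit remark that $\sum_i (w_i^r - w_i) = 0$, because both weight families are softmax outputs, supplies the justification that the paper's final line leaves implicit.
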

See \cref{sec:A.4} for the proof. When a lookahead sample $\hat{\rvx}_0^i$ attains a high reward $r(\hat{\rvx}_0^i,\rvc)$, its weight $w_i^{r}$ increases relative to $w_i$, causing the guidance to steer the current particle $\rvx_t$ toward $\hat{\rvx}_0^i$, and vice versa. \cref{alg:2} provides a procedure for LiDAR sampling with $\tau$ sampling steps, i.e., $(t_\tau = T, t_0 = 0)$.

\renewcommand{\arraystretch}{1.0}
\setlength{\tabcolsep}{7pt}
\begin{table*}[t]
\small
    \centering
    \caption{Performance on GenEval prompts. ImageReward (IR) is used as the reward for all sampling methods. All performance metrics are averaged over four images per prompt, while Time and Memory report the average cost of generating four images per run on a single A100 GPU. \textbf{Bold} values indicate the best results. *: Because generating four images simultaneously causes out-of-memory (OOM) errors, images are generated sequentially with a batch size of 1 over four runs.}
     \vspace{-0mm}
          \begin{tabular}{clcccccc}
        \toprule
               \multirow{2}{*}{\textbf{Backbone}} &   \multirow{2}{*}{\textbf{Guidance Method}}& \multicolumn{4}{c}{\textbf{Performance} $(\uparrow)$}  &\multicolumn{2}{c}{\textbf{Inference Cost} $(\downarrow)$} \\
               \cmidrule(lr){3-6} \cmidrule(lr){7-8}
               &&IR& CLIP & 
             HPS&GenEval& Time{\scriptsize~(sec.)} &   Mem.{\scriptsize~(GiB)}  \\
            \midrule
            \midrule
          \multirow{6}{*}{\makecell{SD v1.5 (0.9B)\\{\small~\cite{rombach2022high}} \\(w/ DDPM 100 steps)}}&  Vanilla& - 0.001 &0.271& 0.263&0.426&7.07&8.90\\
          \cmidrule(lr){2-8}
          \cmidrule(lr){2-8}
            &  Vanilla (250-step) &0.003 &0.272&0.264& 0.430&17.42&\textbf{8.90}\\
            &  UG~\cite{bansal2024universal} &0.326 &0.262&0.236& 0.355&58.36& 28.16\\
            &  DATE~\cite{na2025diffusion}  &0.364 &0.274&0.267& 0.438&32.89&24.71\\
            & \cellcolor{gray!25}LiDAR (DPM-5 / $n$=50) &\cellcolor{gray!25}\textbf{0.384} &\cellcolor{gray!25}\textbf{0.278}&\cellcolor{gray!25}\textbf{0.276}&\cellcolor{gray!25}\textbf{0.478}&\cellcolor{gray!25}\textbf{13.41}&\cellcolor{gray!25}\textbf{8.90}\\
            \midrule
          \multirow{6}{*}{\makecell{SD v1.5 (0.9B)\\{\small~\cite{rombach2022high}} \\(w/ DDIM 50 steps)}}&  Vanilla& - 0.125 &0.269&0.270&0.423&3.58&8.90\\
          \cmidrule(lr){2-8}
          \cmidrule(lr){2-8}
            &  Vanilla (200-step) &- 0.112 &0.269&0.270& 0.415&14.09&\textbf{8.90}\\
            &  UG~\cite{bansal2024universal} &0.201 &0.259&0.236& 0.344&29.59&28.16\\
            &  DATE~\cite{na2025diffusion}  &0.097 &0.271&0.261& 0.419&17.12&24.71\\
            & \cellcolor{gray!25}LiDAR (DPM-5 / $n$=50) &\cellcolor{gray!25}\textbf{0.378} &\cellcolor{gray!25}\textbf{0.278}&\cellcolor{gray!25}\textbf{0.277}&\cellcolor{gray!25}\textbf{0.475}&\cellcolor{gray!25}\textbf{9.92}&\cellcolor{gray!25}\textbf{8.90}\\
            \midrule
             \midrule
            \multirow{8}{*}{\makecell{SDXL (2.6B) \\{\small~\cite{podell2024sdxl}} \\(w/ DDPM 100 steps)}}&  Vanilla& 0.722 &0.282&0.292&0.545&42.00&33.84 \\
          \cmidrule(lr){2-8}
          \cmidrule(lr){2-8}
            &  Vanilla (250-step) &0.746 &0.283&0.295& 0.559&104.10&\textbf{33.84}\\
            &  UG~\cite{bansal2024universal} &0.749 &0.279&0.287& 0.541&334.43&OOM* \\
            &  DATE~\cite{na2025diffusion}  &0.960 &0.283&0.294& 0.570&272.32&OOM* \\
            & \cellcolor{gray!25}LiDAR (DPM-8 / $n$=50) &\cellcolor{gray!25}0.994 &\cellcolor{gray!25}0.285&\cellcolor{gray!25}0.300&\cellcolor{gray!25}0.585&\cellcolor{gray!25}97.99&\cellcolor{gray!25}\textbf{33.84}  \\
            & \cellcolor{gray!25}LiDAR (LCM-4 / $n$=100) &\cellcolor{gray!25}\textbf{1.007} &\cellcolor{gray!25}\textbf{0.286}&\cellcolor{gray!25}0.300&\cellcolor{gray!25}0.585&\cellcolor{gray!25}93.18&\cellcolor{gray!25}\textbf{33.84}  \\
            & \cellcolor{gray!25}LiDAR (DMD-1 / $n$=100) &\cellcolor{gray!25}1.006 &\cellcolor{gray!25}0.285&\cellcolor{gray!25}\textbf{0.302}&\cellcolor{gray!25}\textbf{0.598}&\cellcolor{gray!25}\textbf{78.67}&\cellcolor{gray!25}\textbf{33.84}  \\
            \midrule
        \bottomrule
    \end{tabular}
    \vspace{-0mm}
    \label{tab:main2}
\end{table*}
\renewcommand{\arraystretch}{1.0}

\subsection{Probabilistic Scaling Laws}
\label{sec:3.4}
There exists a gap between the proposed generative distribution and the original target distribution $p^{r}_{\param}(\rvx_0|\rvc)$ due to the finite lookahead sampling step $\delta$ and the number of samples $n$. \Cref{fig:1} shows clear empirical scaling behavior as $\delta$ and $n$ increase. The following theorems characterize the gap induced by these factors. 

\begin{restatable}{theorem}{thmb} \label{thmb} (Scaling Law for Lookahead Accuracy $\delta$) Let $\delta$ be the discretization steps for the lookahead sampling, and $\tilde{p}_{\param}^r(\rvx_0|\rvc)$ is corresponding resulting distribution with \cref{eq:ours}. Then, under mild conditions, the following holds:
\begin{align}
D_{TV}(p^{r}_{\param}(\rvx_0|\rvc)||\tilde{p}_{\param}^r(\rvx_0|\rvc)) \leq \mathcal{O}\left(1/\sqrt{\delta}\right).
\end{align}
\end{restatable}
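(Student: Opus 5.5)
The plan is to reduce the total-variation gap between the two reverse processes to a gap between their drifts via Girsanov/Pinsker, and then to express that drift gap through the error of the lookahead generator $q$, which is controlled by $\delta$.

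\textbf{Step 1 (path-space bound).} Both $p^{r}_{\param}(\rvx_0|\rvc)$ and $\tilde{p}^{r}_{\param}(\rvx_0|\rvc)$ are time-$0$ marginals of the reverse SDE \cref{eq:reverse_sde}. They start from the same prior $p(\rvx_T)$. Their drifts differ only through the guidance terms, $\nabla_{\rvx_t}\efr$ versus $\nabla_{\rvx_t}\tilde{r}^{\lambda}_t$. By data processing, Pinsker's inequality and Girsanov's theorem (under Novikov-type integrability, part of the ``mild conditions''), we get
\begin{align*}
D_{TV}(p^{r}_{\param}\|\tilde{p}^{r}_{\param}) \le \sqrt{\tfrac{1}{2}\mathrm{KL}} \le \tfrac12\Big(\int_0^T g_\sigma^2(t)\,\mathbb{E}_{p^{r}_{\param}(\rvx_t|\rvc)}\big\|\nabla_{\rvx_t}\efr-\nabla_{\rvx_t}\tilde{r}^{\lambda}_t\big\|^2dt\Big)^{1/2}.
\end{align*}
This square root is where the $1/\sqrt{\delta}$ rate will come from. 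It therefore suffices to show that the integrated squared drift error is $\mathcal{O}(1/\delta)$.

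\textbf{Step 2 (drift error as a score difference).} By \cref{thma} and \cref{defa}, both rewards have the same form $\log \mathbb{E}_{\mu}[p(\rvx_t|\rvx_0)e^{\lambda r}] - \log \mathbb{E}_{\mu}[p(\rvx_t|\rvx_0)]$, with $\mu=p_{\param}(\cdot|\rvc)$ and $\mu=q(\cdot|\rvc)$ respectively. Equivalently, as in \cref{eq:weak_to_strong}, each equals a log-ratio of marginals $\log(p^r_t/p_t)$ versus $\log(q^r_t/q_t)$, where $p^r_t$ is the forward diffusion of the tilted measure. The drift error is then
\begin{align*}
\nabla\log p^r_t-\nabla\log q^r_t-\big(\nabla\log p_t-\nabla\log q_t\big).
\end{align*}
Each bracket is a score difference between Gaussian-smoothed versions of two measures. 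The smoothing is by $\mathcal{N}(0,\sigma_t^2\mathbf{I})$, applied to $\mu$ and to $\mu e^{\lambda r}$.

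\textbf{Step 3 (relating the score difference to $\delta$).} Assume $r$ is bounded; this is among the mild conditions. Then the density ratio $e^{\lambda r}/Z$ is bounded above and below by $e^{\pm 2\lambda\|r\|_\infty}$. This lets us change measure between the $p^r_t$-weighted and $p_t$-weighted expectations in Step 1, and between tilted and untilted score errors, at the cost of constants depending on $\lambda$. The remaining task is to bound $\int_0^T g_\sigma^2\,\mathbb{E}\|\nabla\log p_t-\nabla\log q_t\|^2dt$. This is a Fisher-type divergence along the forward diffusion between $p_{\param}(\rvx_0|\rvc)$ and $q(\rvx_0|\rvc)$. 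By the de Bruijn identity it equals $2\,\mathrm{KL}(p_{\param}(\rvx_0|\rvc)\|q(\rvx_0|\rvc))$, up to a vanishing terminal KL term, since both measures share the same forward kernel.

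For the tilted pair the same identity gives $2\mathrm{KL}(p^r\|q^r)$. Because the tilt is bounded, this is at most a constant times $\mathrm{KL}(p\|q)$ plus a term in $\log(Z_p/Z_q)$, which is also controlled by $\mathrm{KL}$. Finally, we invoke a standard discretization result for a $\delta$-step solver of \cref{eq:reverse_sde}, e.g.\ Chen et al.-type Euler--Maruyama bounds under $L$-Lipschitz score and bounded second moments. It gives $\mathrm{KL}(p_{\param}(\rvx_0|\rvc)\|q(\rvx_0|\rvc))=\mathcal{O}(1/\delta)$, taking the pre-trained score as exact so that only discretization error enters $q$. Combining this with Step 1 yields $D_{TV}\le\mathcal{O}(\sqrt{1/\delta})=\mathcal{O}(1/\sqrt{\delta})$.

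\textbf{Main obstacle.} The delicate point is Step 3, in two respects. First, we need the de Bruijn/Fisher identity to bound the \emph{weighted} time-integral with the weight $g_\sigma^2$ that matches the forward SDE. Second, we must handle the change of the outer expectation from $p_t$ to $p^r_t$, together with the cross term between tilted and untilted score errors; I would bound that cross term by $(a-b)^2\le 2a^2+2b^2$. I would first try to obtain everything through bounded-tilt density ratios. If the de Bruijn route fails because the KL is infinite when $q$ is a deterministic ODE output, I would use the SDE/DDPM version of the few-step solver, or add the solver's terminal noise, so that KL is finite. This restriction is what I would list among the ``mild conditions'' of \cref{thmb}.
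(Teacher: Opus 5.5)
Your route differs from the paper's and is sound in outline, but one step needs repair (second paragraph).

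\textbf{Comparison with the paper.} The paper never compares $\nabla_{\rvx_t}r^{\lambda}_t$ with $\nabla_{\rvx_t}\tilde{r}^{\lambda}_t$ directly. It inserts the pivot $q^r(\rvx_0|\rvc)\propto q(\rvx_0|\rvc)\exp(\lambda\cdot r(\rvx_0,\rvc))$ and applies the triangle inequality.
\begin{itemize}
\item The leg $D_{TV}(p^r_{\param}\|q^r)$ is a static comparison. Using $m\le\exp(\lambda\cdot r)\le M$, it is bounded by $\frac{2M}{m}D_{TV}(p_{\param}\|q)$ and then by a cited $\mathcal{O}(1/\sqrt{\delta})$ result.
\item In the leg $D_{TV}(\tilde{p}^r_{\param}\|q^r)$, the guidance $\nabla_{\rvx_t}\tilde{r}^{\lambda}_t$ appears in both drifts and cancels. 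Pinsker and the Song et al.\ bound leave only $\int_0^T g_\sigma^2\,\mathbb{E}\|\nabla\log p_{\param}(\rvx_t|\rvc)-\nabla\log q(\rvx_t|\rvc)\|^2dt$. The paper controls this with Lipschitzness, an extra assumption that the two scores coincide on the solver grid, and strong Euler--Maruyama convergence.
\end{itemize}
You instead run one Girsanov comparison, split the drift gap into tilted and untilted score gaps, and use de Bruijn to turn each integrated Fisher gap into a terminal KL. All $\delta$-dependence then flows through the single input $\mathrm{KL}(p_{\param}(\rvx_0|\rvc)\|q(\rvx_0|\rvc))=\mathcal{O}(1/\delta)$.

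This is cleaner. De Bruijn shows the paper's $\mathcal{L}_{SM}$ equals exactly $\mathrm{KL}(p_{\param}(\rvx_0|\rvc)\|q(\rvx_0|\rvc))-\mathrm{KL}(p_{\param}(\rvx_T|\rvc)\|q(\rvx_T|\rvc))$. So the grid assumption is unnecessary, and you get a KL bound rather than only a TV bound. Your worry about the weighting is moot: $\frac{d}{dt}\mathrm{KL}(p_t\|q_t)=-\frac{1}{2}g_\sigma^2(t)\,\mathbb{E}_{p_t}\|\nabla\log p_t-\nabla\log q_t\|^2$ carries exactly $g_\sigma^2$. The costs are that you need a KL rate, not just a TV rate, for the lookahead sampler, and a comparison of tilted KLs.

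You also drop a $\delta$-independent prior term: the exact reversal yielding $p^r_{\param}$ starts at $p^r_{\param}(\rvx_T|\rvc)$, not at $p(\rvx_T)$. The paper does the same, leaving $D_{KL}(\tilde{p}^r_{\param}(\rvx_T|\rvc)\|q^r(\rvx_T|\rvc))$ unbounded.

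\textbf{The gap: bounding $\mathrm{KL}(p^r\|q^r)$.} Write $w=\exp(\lambda\cdot r)$, $Z_p=\mathbb{E}_p[w]$, $Z_q=\mathbb{E}_q[w]$. You propose bounding $\mathrm{KL}(p^r\|q^r)=\mathbb{E}_{p^r}[\log(p/q)]+\log(Z_q/Z_p)$ term by term, but neither term is $\mathcal{O}(\mathrm{KL}(p\|q))$.

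Take $p=q(1+\epsilon h)$ with $\mathbb{E}_q[h]=0$. Both terms are $\approx\pm\epsilon\,\mathbb{E}_{q^r}[h]$, while $\mathrm{KL}(p\|q)\approx\frac{\epsilon^2}{2}\mathbb{E}_q[h^2]$. They cancel only in the sum. Bounding them separately, e.g.\ via $|\log(Z_q/Z_p)|\le\frac{2M}{m}D_{TV}(p\|q)$ and Pinsker, gives at best $\mathcal{O}(\sqrt{\mathrm{KL}(p\|q)})=\mathcal{O}(\delta^{-1/2})$. That degrades your final rate to $\mathcal{O}(\delta^{-1/4})$.

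A clean fix uses rejection sampling and the chain rule:
\begin{itemize}
\item Let $b\mid\rvx_0\sim\mathrm{Bern}(w(\rvx_0)/M)$ under both $p$ and $q$. The two joint laws have KL exactly $\mathrm{KL}(p\|q)$, since the kernel is shared.
\item Conditioning on $b=1$ gives $p^r$ and $q^r$.
\item The chain rule in the other order gives $\mathrm{KL}(p\|q)\ge P(b=1)\,\mathrm{KL}(p^r\|q^r)$, with $P(b=1)=Z_p/M\ge m/M$.
\end{itemize}
Hence $\mathrm{KL}(p^r\|q^r)\le\frac{M}{m}\mathrm{KL}(p\|q)$.

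With this lemma the rest of your Step 3 goes through: the $(a-b)^2$ split, the change of measure via $p^r_t/p_t\le M/m$, and dropping the nonnegative terminal KLs. Combined with the $\mathcal{O}(1/\delta)$ sampler bound, this yields the claimed $\mathcal{O}(1/\sqrt{\delta})$.
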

Please see \Cref{sec:A.3} for the proof. The resulting upper bound, $\mathcal{O}\left(1/\sqrt{\delta}\right)$, implies that increasing $\delta$ reduces the discretization error of the lookahead sampler, causing it to converge to the target distribution. 

\begin{restatable}{theorem}{thmc} \label{thmc} (Scaling Law for Lookahead Sample Size $n$) Under mild conditions, for all $\rvx_t$ and $\rvc$, 
the lookahead reward $\tilde{r}_t^\lambda$ and its empirical estimate 
$\hat{r}_t^\lambda$ satisfy:
\begin{align}
& \tilde{r}_{t}^{\lambda}(\mathbf{x}_t,\mathbf{c})
- \hat{r}_{t}^{\lambda}(\mathbf{x}_t,\mathbf{c}) \nonumber \\
& \overset{d}{\to}
\mathcal{N}
\left(
0,
\frac{
\operatorname{Var}
\left(
p(\mathbf{x}_t \mid \mathbf{x}_0)
\exp\!\left( \lambda \cdot r(\mathbf{x}_0,\mathbf{c})
\right)
\right)
}{
\sqrt{n}\left(
\mathbb{E}_{q(\mathbf{x}_0 \mid \mathbf{c})}
\left[
p(\mathbf{x}_t \mid \mathbf{x}_0)
\right] \exp\!\left(
\tilde{r}_{t}^{\lambda}(\mathbf{x}_t,\mathbf{c})
\right)
\right)^2
}
\right).\nonumber
\end{align}
\end{restatable}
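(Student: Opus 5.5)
The plan is to treat $\hat{r}^{\lambda}_t$ as a ratio of two empirical means and apply the central limit theorem together with the delta method. The normalizing constant $\frac{1}{n}\sum_j p(\rvx_t|\hat{\rvx}^j_0)$ does not depend on $i$, so it can be pulled out of the sum in \cref{eq:lookahead_finite_n}. This gives
\begin{align}
\hat{r}^{\lambda}_t(\rvx_t,\rvc)=\log \bar{A}_n-\log \bar{B}_n,\nonumber
\end{align}
where $\bar{A}_n:=\frac1n\sum_i p(\rvx_t|\hat{\rvx}^i_0)\exp(\lambda r(\hat{\rvx}^i_0,\rvc))$ and $\bar{B}_n:=\frac1n\sum_i p(\rvx_t|\hat{\rvx}^i_0)$. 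The same manipulation applied to \cref{eq:lookahead_reward} gives $\tilde{r}^{\lambda}_t=\log \mu_A-\log\mu_B$, with $\mu_A:=\mathbb{E}_q[p(\rvx_t|\rvx_0)e^{\lambda r}]$ and $\mu_B:=\mathbb{E}_q[p(\rvx_t|\rvx_0)]$. In particular $\mu_A=\mu_B\exp(\tilde{r}^{\lambda}_t)$, and this is exactly the quantity that appears squared in the denominator of the stated variance.

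The "mild conditions" I will assume are: $\hat{\rvx}^i_0\overset{iid}{\sim}q(\rvx_0|\rvc)$; $r$ is bounded, or more generally $p(\rvx_t|\rvx_0)e^{\lambda r}$ has finite second moment under $q$; and $\sigma_t>0$. Under the second condition $p(\rvx_t|\rvx_0)\le(2\pi\sigma_t^2)^{-d/2}$ is bounded, so both summands have finite variance. I also need $\mu_B>0$, which holds automatically because the Gaussian kernel is strictly positive.

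With these assumptions, the bivariate CLT gives
\begin{align}
\sqrt n\,(\bar{A}_n-\mu_A,\;\bar{B}_n-\mu_B)\overset{d}{\to}\mathcal N(0,\Sigma),\nonumber
\end{align}
and the delta method applied to $g(a,b)=\log a-\log b$ at $(\mu_A,\mu_B)$ then yields asymptotic normality of $\hat{r}-\tilde{r}$.

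The main obstacle is matching the stated variance, which contains only $\operatorname{Var}(p\,e^{\lambda r})/\mu_A^2$ with $\mu_A=\mu_B e^{\tilde r}$. The full delta method would produce
\begin{align}
\frac{\operatorname{Var}(A)}{\mu_A^2}-\frac{2\operatorname{Cov}(A,B)}{\mu_A\mu_B}+\frac{\operatorname{Var}(B)}{\mu_B^2}.\nonumber
\end{align}
I would try first to reproduce the theorem's form by treating the denominator $\bar{B}_n$ as fixed, i.e.\ replacing it by its limit $\mu_B$ via Slutsky's lemma. The justification is that in LiDAR the normalizer is shared and enters the guidance of \cref{propa} only through the softmax, so its fluctuation is not the quantity of interest. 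Under that convention, the univariate CLT for $\bar{A}_n$ and the delta method for $\log$ give fluctuation variance $\operatorname{Var}(A)/(n\mu_A^2)$, which is the stated expression. I will state this convention explicitly, read the $\sqrt n$ in the statement as the $n$-dependent scaling (more precisely, the stated limit is that of $\sqrt n(\tilde r-\hat r)$ up to this scaling convention), and remark that the full ratio estimator adds the covariance and $\operatorname{Var}(B)$ terms without changing the $\mathcal{O}(n^{-1/2})$ rate. The sign flip $\tilde{r}-\hat{r}$ does not matter, since the limit is a centered Gaussian.
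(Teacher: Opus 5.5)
Your route is the paper's route. The paper also applies the WLLN to $\bar B_n=\frac1n\sum_j p(\rvx_t|\hat{\rvx}_0^j)$ and the CLT to $\bar A_n$. It then combines them ``via Slutsky's theorem'' to get $\sqrt n\,(e^{\hat r^\lambda_t}-e^{\tilde r^\lambda_t})\xrightarrow{d}\mathcal{N}(0,\operatorname{Var}(A)/\mu_B^2)$, and finishes with the delta method for $\log$. The step you call a convention is exactly the step the paper calls Slutsky.

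Your suspicion of that step is justified. It is a genuine gap, and no better argument can close it, because the displayed variance is not the asymptotic variance of $\hat r^\lambda_t$. What your convention actually proves is a CLT for the oracle-normalized quantity $\log\bar A_n-\log\mu_B$, which is not the estimator in \cref{eq:lookahead_finite_n}.

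Slutsky only gives the limit of $\sqrt n(\bar A_n-\mu_A)/\bar B_n$. The actual error decomposes as $\sqrt n(\bar A_n/\bar B_n-\mu_A/\mu_B)=\sqrt n(\bar A_n-\mu_A)/\bar B_n-\sqrt n\,\mu_A(\bar B_n-\mu_B)/(\bar B_n\mu_B)$. The second term is $O_p(1)$, asymptotically Gaussian, and correlated with the first, since both are built from the same $\hat{\rvx}_0^i$. It cannot be dropped.

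A one-line counterexample: for $\lambda=0$, or any constant $r$, $\hat r^\lambda_t\equiv\tilde r^\lambda_t$ for every $n$. The claimed variance is nevertheless $\operatorname{Var}(p(\rvx_t|\rvx_0))/(\sqrt n\,\mu_B^2)>0$ whenever $p(\rvx_t|\rvx_0)$ is not $q$-a.s.\ constant.

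Your informal justification also points the wrong way. $\bar B_n$ is precisely what produces the $-w_i$ term in \cref{eq:empirical_guidance}. Its correlation with $\bar A_n$ is what makes the self-normalized estimator accurate, so its fluctuation is not irrelevant.

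So do not adopt the convention; finish the bivariate computation you already set up. Your assumptions suffice: $\rvx_t$ fixed, $\hat{\rvx}_0^i$ i.i.d.\ from $q(\rvx_0|\rvc)$, $\sigma_t>0$, and $r$ bounded. They are cleaner than the paper's STF-style assumption $\rvx_t\sim p(\rvx_t|\hat{\rvx}_0^1)$, which is at odds with ``for all $\rvx_t$''. The CLT plus the delta method for $g(a,b)=\log a-\log b$ give $\sqrt n(\tilde r^\lambda_t-\hat r^\lambda_t)\xrightarrow{d}\mathcal{N}(0,\sigma^2)$, with
\[
\sigma^2=\operatorname{Var}(A/\mu_A-B/\mu_B)=\frac{\operatorname{Var}\bigl(p(\rvx_t|\rvx_0)(e^{\lambda r(\rvx_0,\rvc)}-e^{\tilde r^\lambda_t(\rvx_t,\rvc)})\bigr)}{\bigl(\mathbb{E}_{q(\rvx_0|\rvc)}[p(\rvx_t|\rvx_0)]\,e^{\tilde r^\lambda_t(\rvx_t,\rvc)}\bigr)^2}.
\]
This is the stated formula with $e^{\lambda r}$ centered at $e^{\tilde r^\lambda_t}$, the usual self-normalized importance-sampling variance. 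It is the correct version of the theorem and still delivers the intended $O(n^{-1/2})$ scaling law.

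Finally, do not ``read'' the $\sqrt n$ in the statement as a scaling convention. A limit in distribution cannot depend on $n$, and the heuristic finite-$n$ variance is $\sigma^2/n$, not $\sigma^2/\sqrt n$. The paper's proof derives the $\sqrt n$-scaled limit and then misplaces the factor.
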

Please see \cref{sec:A.6} for the proof. Note that the input to the variance function is a random variable arising from the stochasticity of $\rvx_0 \sim q(\rvx_0|\rvc)$.  This result shows that using more lookahead samples (i.e., larger $n$) reduces the variance of the approximation to the target lookahead reward. Accordingly, this suggests that the proposed generative distribution approaches $\tilde{p}_{\param}^r(\rvx_0|\rvc)$ as $n$ increases.

\begin{figure*}[t!]
    \centering
    \begin{subfigure}{0.33\textwidth}
        \centering
        \includegraphics[width=\linewidth]{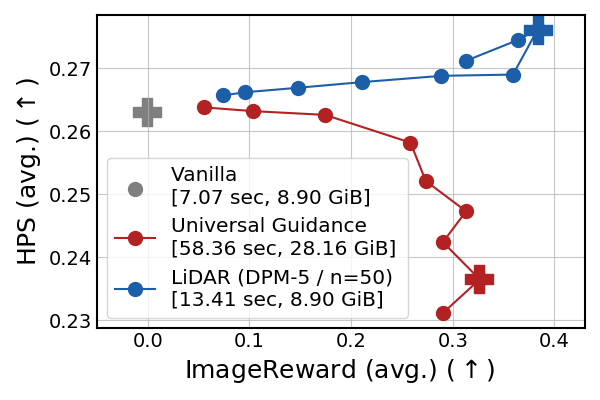}
        \vspace{-2mm}
        \caption{Guidance scale ablation (SD v1.5)}
        \label{fig:3.a}
    \end{subfigure}
    \begin{subfigure}{0.33\textwidth}
        \centering
        \includegraphics[width=\linewidth]{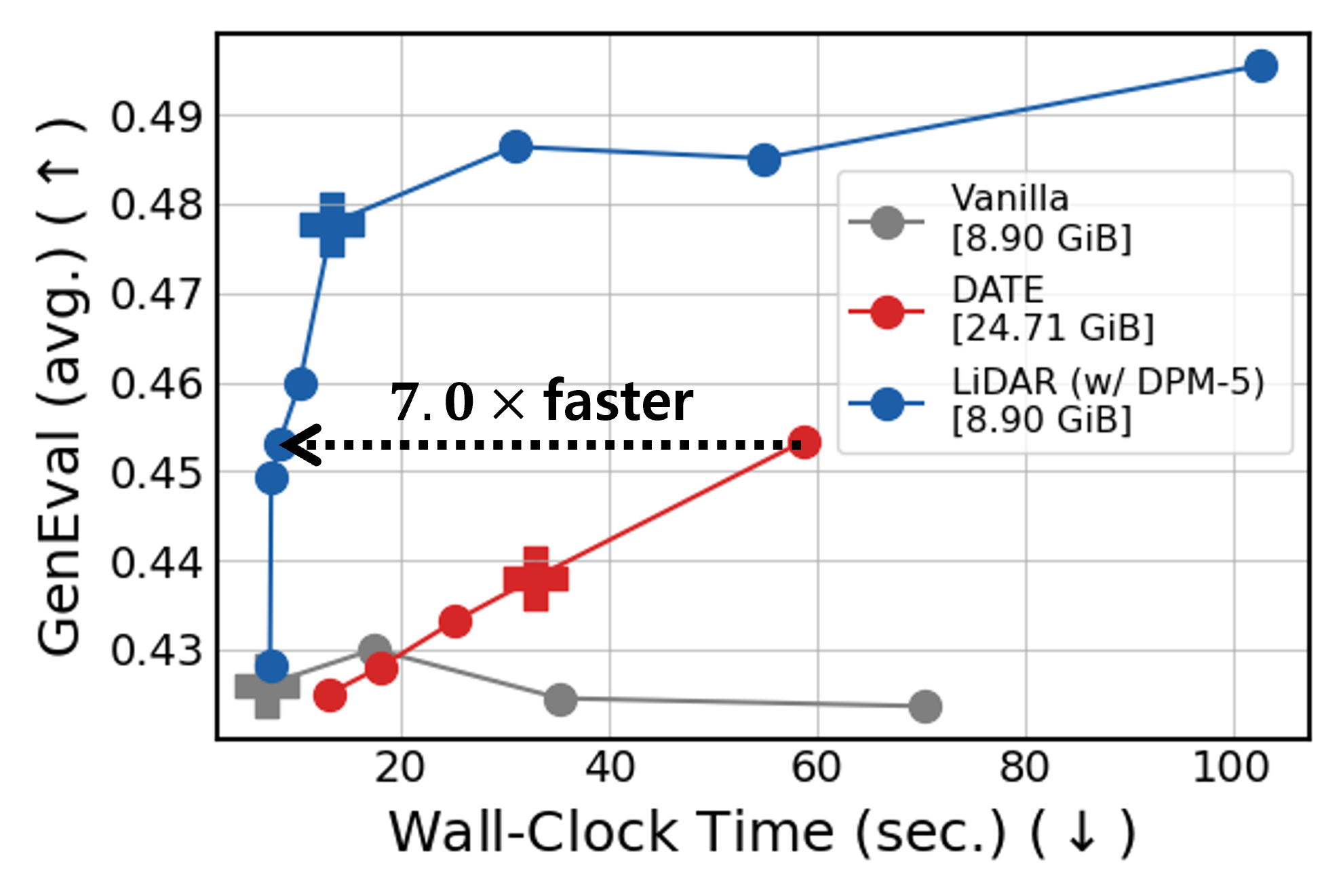}
        \vspace{-2mm}
        \caption{SD v1.5 backbone}
        \label{fig:3.b}
    \end{subfigure}
    \begin{subfigure}{0.33\textwidth}
        \centering
        \includegraphics[width=\linewidth]{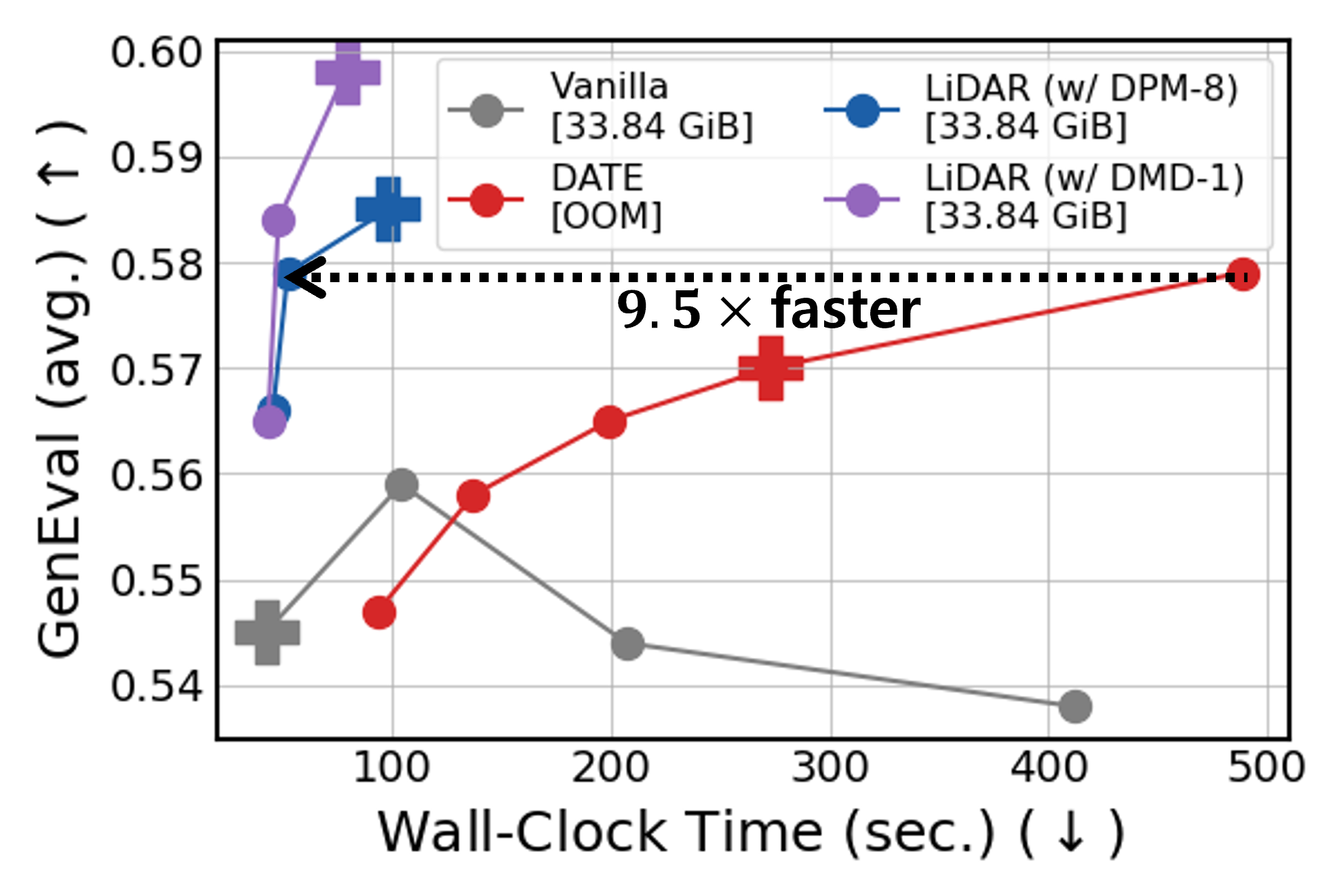}
        \vspace{-2mm}
        \caption{SDXL backbone}
        \label{fig:3.c}
    \end{subfigure}
    \label{fig:3}
    \vspace{-2mm}
    \caption{(a) Performance trade-offs between UG and LiDAR. (b, c) Efficiency–performance trade-offs for test-time scaling methods. Vanilla increases sampling steps, DATE adjusts gradient update frequency, and LiDAR controls $n$. The cross marker indicates the performance in \cref{tab:main2}. The black dotted line indicates the time difference for LiDAR to reach DATE’s maximum scaling performance.}
    \vspace{-1mm}
\end{figure*}

\begin{table*}[t]
\centering
\begin{minipage}[t]{0.35\textwidth}
\centering
\caption{Performance of LiDAR when applied to the flow matching model, FLUX~\cite{flux2024}.}
\begin{tabular}{cccc}
\toprule
$n$& IR  & CLIP& GenEval \\
\midrule
0 & 1.019 &0.282&0.645 \\
\midrule
3 & 1.038 & 0.282&0.663 \\
50 & 1.114 & 0.283& \textbf{0.668}\\
100 & \textbf{1.198} & \textbf{0.284}&0.667 \\
\bottomrule
\vspace{-0mm}
\end{tabular}
\label{tab:table_a}
\vspace{-2mm}
\end{minipage}
\hfill
\begin{minipage}[t]{0.64\textwidth}
\centering
\caption{LiDAR performance on the discrete diffusion model UDLM~\cite{schiff2025simple} trained on QM9~\cite{ruddigkeit2012enumeration}, using the ring count reward.}
\begin{tabular}{cccc}
\toprule
$n$& Num novel $(\uparrow)$ & Novel ring count $(\uparrow)$ & Total ring count $(\uparrow)$ \\
\midrule
0 & 130 & 2.192&1.753 \\
\midrule
16 & 205 & 3.034&2.601 \\
1024 & 251 & 3.948&\textbf{3.393} \\
4096 &\textbf{257} & \textbf{4.128}& 3.067\\
\bottomrule
\end{tabular}
\label{tab:table_b}
\vspace{-2mm}
\end{minipage}
\end{table*}

\section{Experiments}

\subsection{Comparison to Gradient Guidance Method}
\label{sec:4.1}
This section compares LiDAR against gradient guidance, which serves as our primary baseline. \Cref{tab:main2} compares LiDAR with Universal Guidance (UG)~\cite{bansal2024universal} and DATE~\cite{na2025diffusion} using SD v1.5~\cite{rombach2022high} and SDXL~\cite{podell2024sdxl} backbones (with CFG~\cite{ho2021classifierfree} 7.5). All methods use ImageReward (IR)~\cite{xu2023imagereward} as the reward function and are evaluated on GenEval~\cite{ghosh2023geneval} prompts. Following GenEval’s protocol, which evaluates four images per prompt based on commercial service use cases~\cite{microsoft_bing,grok_imagine}, we generate $N=4$ images per prompt and report both the average performance and the total cost of generating the four images. We treat the GenEval score as the primary test metric, while using CLIP~\cite{radford2021learning} and Human Preference Score (HPS)~\cite{wu2023human} as validation metrics to mitigate reward hacking in baseline methods.

\setlength{\tabcolsep}{7pt}
\begin{table*}[t]
\renewcommand{\arraystretch}{1.0}
\centering
\begin{minipage}[t]{0.49\textwidth}
\centering
\small
\caption{Orthogonal improvements applied to a DPO-tuned SD v1.5~\cite{wallace2024diffusion}. *: Fine-tuning costs are not included.}
\vspace{-0mm}
 \begin{tabular}{cccccc}
        \toprule
         Lookahead& $n$& IR  & CLIP& GenEval& Time \\
        \midrule
        - & 0 & - 0.001 & 0.271& 0.426 & 7.07 \\
        \midrule
        DPM-3& 3 & 0.109  & 0.274&0.439 & 7.44\\
        DPM-5& 3 & 0.172 & 0.275& 0.449 & 7.54\\
        DPM-5& 9 & 0.211  & 0.276&0.453 & 8.27\\
        DPM-5& 50 & \textbf{0.384}  & \textbf{0.278}&\textbf{0.478} & 13.41\\
        \midrule
        \midrule
        (+ DPO) & 0 & 0.106& 0.274 & 0.446 & 7.07* \\
        \midrule
        DPM-5& 3 & 0.268 & 0.277 & 0.452& 7.54*\\
        DPM-5& 9 & 0.296  & 0.277&0.468 & 8.27*\\
        DPM-5& 50 & \textbf{0.445} & \textbf{0.280} &\textbf{0.489} & 13.41*\\
       \bottomrule
    \end{tabular}
    \label{tab:main3}
\end{minipage}
\hfill
\setlength{\tabcolsep}{6pt}
\begin{minipage}[t]{0.49\textwidth}
\centering
\caption{Performance on SD v1.5 using a weighted sum of ImageReward and CLIP as reward annotations. All results use a DPM-5 lookahead solver with $n=50$. \textbf{Bold} values indicate the best results.}
\vspace{0mm}
\begin{tabular}{cccccc}
        \toprule
        \multicolumn{2}{c}{\textbf{Reward Mix}}&  \multicolumn{4}{c}{\textbf{Performance} $(\uparrow)$} \\
        \cmidrule(lr){0-1}\cmidrule(lr){2-6}
         IR& CLIP& IR  & CLIP& HPS & GenEval\\
        \midrule
        0 \% & 0 \% & - 0.001 & 0.2711 & 0.263 & 0.426\\
        \midrule
        0 \%& 100 \% & 0.213  & 0.2799 & 0.264&0.454\\
        10 \%& 90 \% & 0.278 & \textbf{0.2801} & 0.265& 0.459\\
        40 \%& 60 \% & 0.374 & 0.2799 & 0.268& 0.476\\
        90 \%& 10 \% & \textbf{0.392}& 0.2781 & 0.268 & 0.473\\
        100 \%& 0 \% & 0.384  & 0.2781 & \textbf{0.276}&\textbf{0.478}\\
       \bottomrule
    \end{tabular}
    \label{tab:main4}
\end{minipage}
\end{table*}

\begin{figure*}[t]
\setlength{\tabcolsep}{4pt}
    \centering
    \small
    \begin{minipage}{0.39\textwidth}
        \centering
        \vspace{-0mm}
          \captionof{table}{Performance comparison with other sample-based guidance methods using our lookahead samples based on DPM-5 lookahead solver with $n=50$.}
        \begin{tabular}{lccc}
        \toprule
         Method& IR& CLIP  & HPS\\
        \midrule
        Vanilla & - 0.001 & 0.271 & 0.263\\
        \midrule
        Safe-D~\cite{kim2025trainingfree} & - 0.001 & 0.271 & 0.262\\
        SR~\cite{kirchhof2025shielded} & 0.014 & 0.272 & 0.263\\
        \midrule
        \rowcolor{gray!25}LiDAR &\textbf{0.384} & \textbf{0.278} & \textbf{0.276} \\
       \bottomrule
    \end{tabular}
    \label{tab:main5}
    \end{minipage}
    \hfill
    \begin{minipage}{0.28\textwidth}
        \centering
        \includegraphics[height=3.0cm]{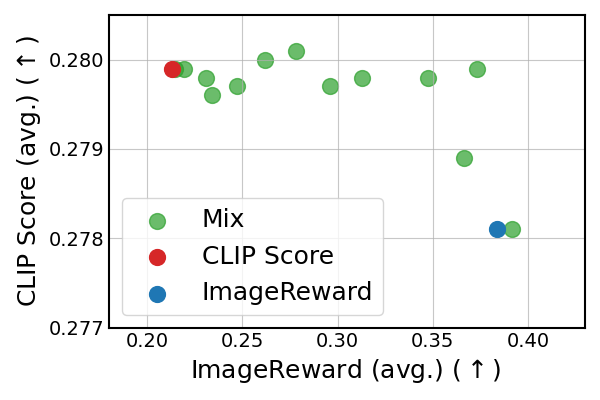}
        \vspace{-0mm}
        \caption{Results on mixed reward annotations on lookahead samples.}
        \label{fig:4}
    \end{minipage}
    \hfill
    \begin{minipage}{0.29\textwidth}
        \centering
        \includegraphics[height=3.0cm]{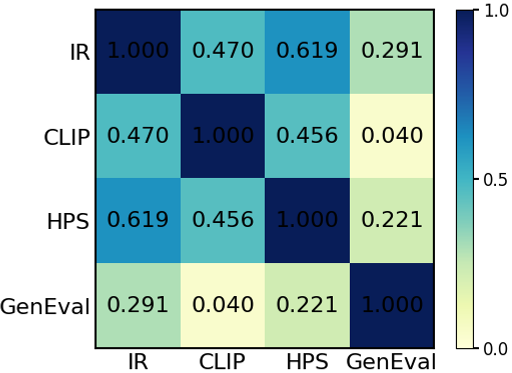}
        \vspace{-0mm}
        \caption{Pairwise correlations between metrics on lookahead samples (DPM-5).}        \label{fig:5}
    \end{minipage}
    \vspace{-0mm}
\end{figure*}

\textbf{Baseline performance}: \Cref{tab:main2} presents an overall comparison of gradient guidance methods. Increasing the number of vanilla sampling steps yields limited improvements across most metrics. In contrast, UG and DATE achieve substantial gains on IR, which is used for guidance. However, UG tends to over-optimize samples for IR, resulting in consistent degradation on other metrics; this trade-off becomes evident as the guidance scale $\lambda$ varies (\Cref{fig:3.a}). DATE addresses this issue by extending UG to update text embeddings and by leveraging information from the $\score$ to update $\rvx_t$, which improves robustness to reward hacking and allows larger guidance scales that enhance both IR and other metrics. Despite these benefits, gradient guidance methods require approximately $3\times$ more memory than vanilla sampling and incur more than $4\times$ longer runtime in all cases.

\textbf{LiDAR against baselines}: The proposed LiDAR in \cref{tab:main2} outperforms all baselines in both performance and inference cost across all backbones and target sampler types. LiDAR uses a DPM solver~\cite{lu2022dpm} as the default lookahead solver, where DPM-5 indicates $\delta = 5$. When using $n=50$ lookahead samples, the memory usage and runtime of the target sampling (\cref{alg:2}) stage are nearly identical to those of vanilla sampling (See \Cref{fig:8}). The additional cost relative to vanilla sampling arises solely from the lookahead sampling and reward annotation stages (\cref{alg:1}). Please refer to \cref{tab:cost_breakdown} for the cost of each component.   Because LiDAR achieves strong performance with relatively small $n$, the overall computational overhead remains modest, making it more efficient than gradient guidance. 

\textbf{Efficiency-performance control curve}: \Cref{fig:3.b,fig:3.c} compare efficiency-performance trade-offs as the scaling factor of each method varies, showing that LiDAR consistently envelopes the baselines. While DATE cannot exploit additional computation beyond its peak performance, which is achieved by applying gradient updates at every step, LiDAR continues to scale with larger $n$ or by improving lookahead accuracy. In particular, LiDAR achieves the peak SDXL performance of DATE approximately 9.5× faster.

\textbf{Leveraging distillation models}: We also evaluate lookahead strategies using distillation models, such as LCM-LoRA (4 steps)~\cite{luo2023lcm} and DMD (1 step)~\cite{yin2024improved}, as shown in the SDXL block of \Cref{tab:main2}. These models provide more accurate lookahead samples per step, resulting in improved efficiency-performance trade-offs.

\begin{figure*}[t]
    \centering
    \begin{minipage}{0.24\textwidth}
        \centering
        \includegraphics[width=\linewidth]{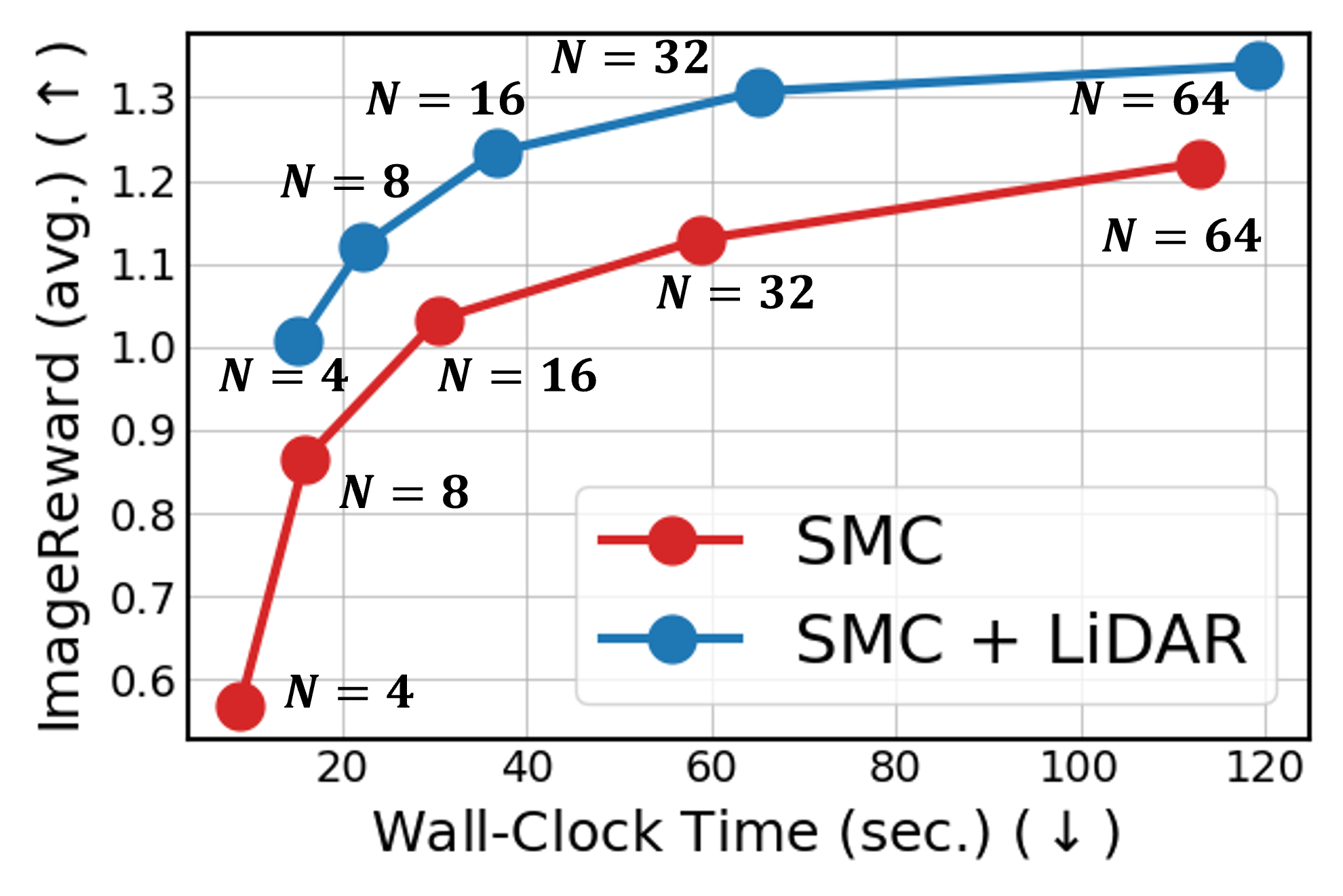}
    \end{minipage}
    \hfill
    \begin{minipage}{0.24\textwidth}
        \centering
        \includegraphics[width=\linewidth]{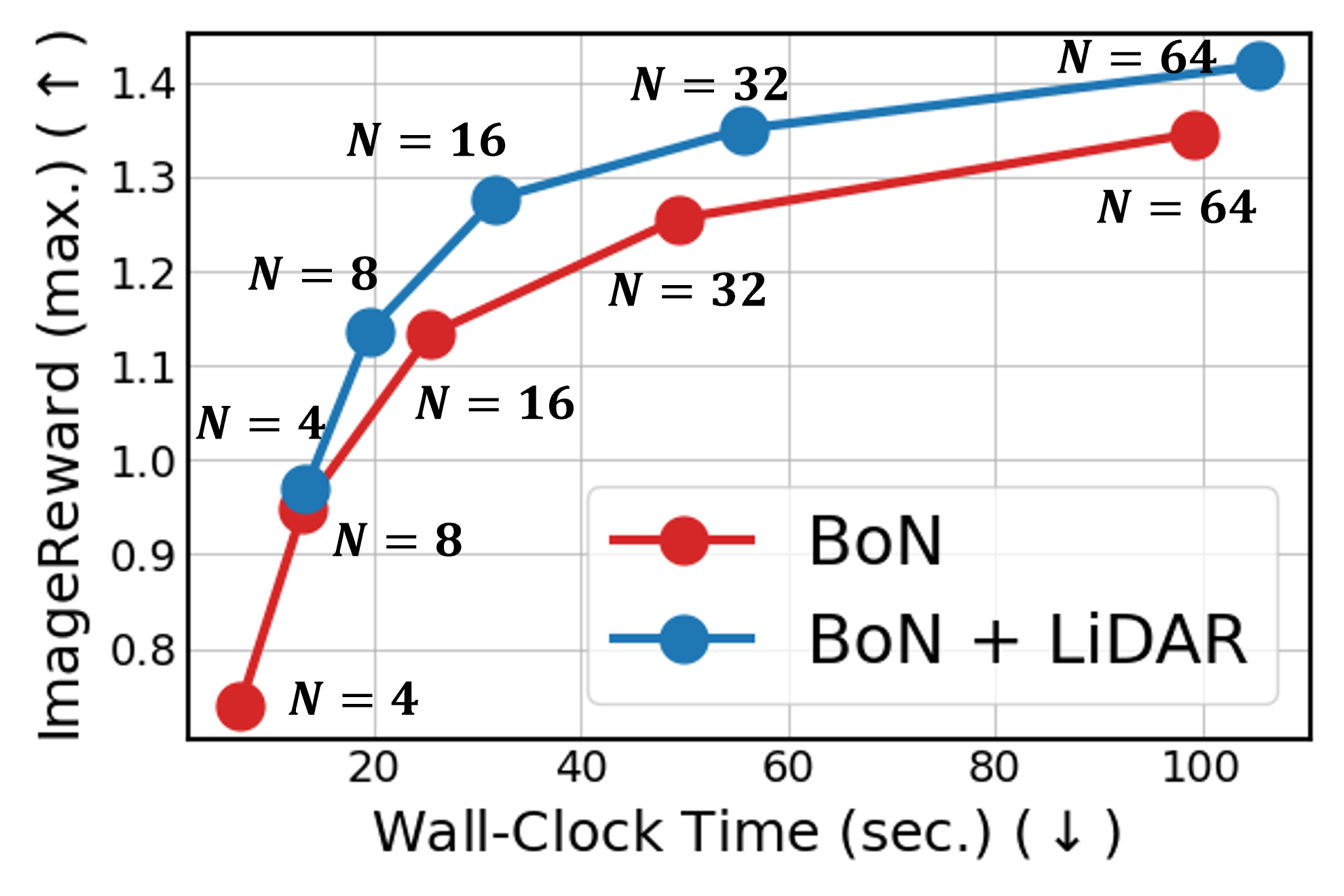}
    \end{minipage}
    \hfill
    \begin{minipage}{0.24\textwidth}
        \centering
        \includegraphics[width=\linewidth]{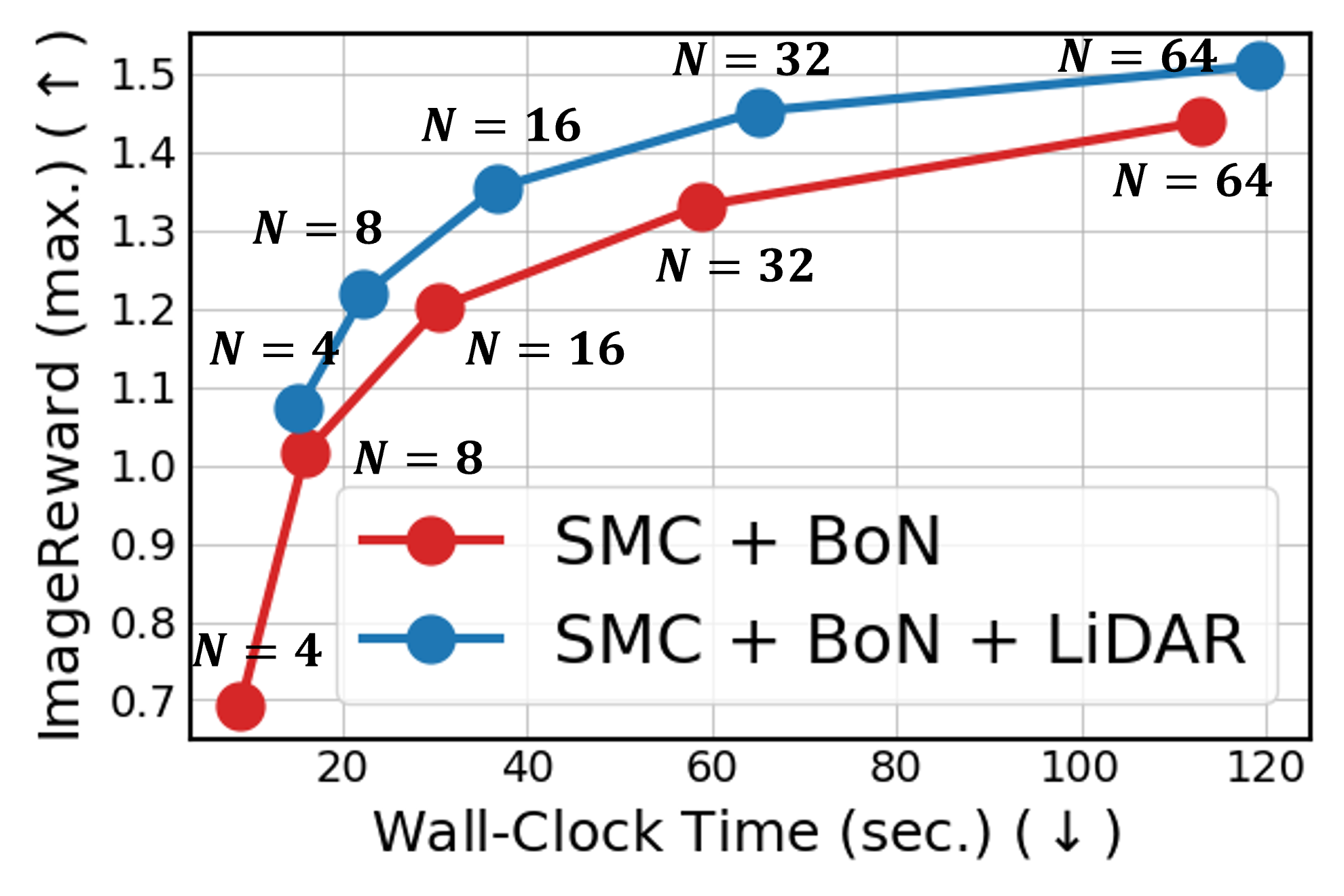}
    \end{minipage}
    \begin{minipage}{0.24\textwidth}
        \centering
        \includegraphics[width=\linewidth]{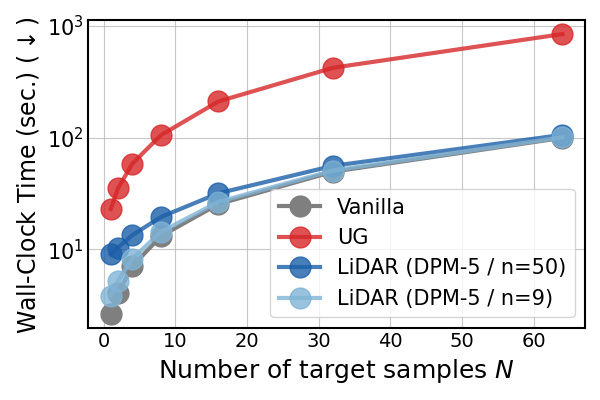}
    \end{minipage}
    \vspace{-0mm}
    \caption{(Left three) Efficiency–performance trade-offs of LiDAR applied to each method as the number of target samples $N$ varies. (Right) Time required to generate $N$ samples.}
    \vspace{-0mm}
    \label{fig:6}
\end{figure*}

\subsection{Analysis}
\label{sec:4.2}
\textbf{Extension to flow matching models:}
Since flow matching models~\cite{lipman2023flow} can be formulated equivalently to diffusion models~\cite{gao2025diffusionmeetsflow}, LiDAR can be applied straightforwardly. As shown in \Cref{tab:table_a}, performance on FLUX~\cite{flux2024} with the IR reward function improves as the number of lookahead samples increases. We use Hyper-Flux-8~\cite{ren2024hyper} as the lookahead sampler.

\textbf{Extension to discrete diffusion models:} For discrete diffusion models, although the Stein score formulation is unavailable, the lookahead reward formulation in \cref{eq:lookahead_reward} remains applicable, as detailed in \Cref{sec:A.7}. Following Discrete Classifier Guidance~\cite{schiff2025simple}, we assume dimension-wise independence when constructing the guidance term. As shown in \Cref{tab:table_b}, LiDAR can be successfully extended to discrete diffusion models, with performance improving as the number of lookahead samples increases. However, a limitation of our approach is that the sequence-level reward guidance is incorporated in a token-independent manner. Developing more expressive guidance mechanisms that capture token dependencies is an important direction for future work. We use an 8-step solver for lookahead sampling and a 32-step solver for target sampling.

\textbf{Performance gains with extremely small additional cost:} Using only a small number of lookahead samples (e.g., $n=3$ with a DPM-3 lookahead solver) yields meaningful performance gains, as shown in \Cref{tab:main3} and highlighted by the red boxes in \Cref{fig:1}. \Cref{fig:compare} presents qualitative examples of lookahead samples alongside their corresponding target outputs. Although all lookahead samples are visually coarse, some are clearly better than others, and providing a signal that favors better lookahead samples over worse ones appears to benefit the target LiDAR sampler.

\textbf{Orthogonal to fine-tuning method:} \Cref{tab:main3} shows that applying LiDAR to a DPO-tuned SD v1.5~\cite{wallace2024diffusion} yields additional performance gains that are orthogonal to fine-tuning. Notably, the improvements from LiDAR are larger than those achieved by fine-tuning alone.

\textbf{Why have all the metrics improved in LiDAR?:}
Although LiDAR uses IR to annotate lookahead samples, it also improves other metrics, unlike UG. \Cref{fig:5} shows that IR is correlated with other metrics for lookahead samples; therefore, guiding target particles toward high-IR lookahead samples naturally steers sampling in directions that improve multiple metrics.

\textbf{Performance with other reward function:} \Cref{tab:main4} shows results using CLIP or a weighted combination of IR and CLIP as the reward. Increasing the weight of IR improves IR scores, while increasing the weight of CLIP improves CLIP scores. Incorporating a small fraction of the other reward function (10\%) yields slightly better performance than using a single metric alone, likely due to the correlations shown in \Cref{fig:5}. The IR reward function achieves higher HPS and GenEval scores than the CLIP reward function, reflecting stronger correlations with these metrics. The combination of IR and CLIP reward function produces frontier points for these metrics, as illustrated in \Cref{fig:4}.

\textbf{Comparison to other sample-based guidance:}
Sample-based diffusion guidance has been explored for safe generation~\cite{kim2025trainingfree,kirchhof2025shielded}, typically assuming access to negative samples. We treat low-reward lookahead samples as negative samples and apply existing guidance methods to evaluate this setting. As shown in \Cref{tab:main5}, negative guidance is ineffective for reward alignment. In Safe-D~\cite{kim2025trainingfree}, guidance is applied only at diffusion timesteps close to pure noise, which is effective for safety tasks where safe and unsafe samples differ at a coarse semantic level. In contrast, image–text alignment requires fine-grained control over attributes such as color and spatial configuration, rendering simple avoidance of undesirable samples insufficient. LiDAR leverages soft reward values to both repel low-quality samples and attract high-quality ones, enabling effective incorporation of reward information.

\textbf{Ablation on $\lambda$ and approximation quality on EFR:}
The left panel of \Cref{fig:7} presents an ablation study on $\lambda$, showing that increasing $\lambda$ consistently improves performance across all metrics. We therefore fix $\lambda = 5000$ for all subsequent experiments. Although this choice slightly reduces diversity, LiDAR continues to produce meaningfully diverse samples, as shown in \Cref{fig:compare}. In contrast, larger $\lambda$ leads to severe performance degradation in UG (See \Cref{fig:3.a} for $\lambda$ scaling in UG; for LiDAR, we fix $\lambda$ and adjust $s$), as samples are pushed away from the data manifold, resulting in unnatural outputs (See \Cref{fig:compare}). This behavior arises from UG’s Taylor approximation, whose error scales proportionally with $\lambda$ (See \Cref{sec:A.5}). The right panel of \Cref{fig:7} reports the mean squared error between the approximated and target EFR as a function of $\lambda$, demonstrating that LiDAR achieves substantially more accurate EFR estimates.

\subsection{Orthogonal Integration with SMC and Best-of-$N$}
\label{sec:4.3}

Sequential Monte Carlo (SMC)~\cite{singhal2025a} and Best-of-$N$ (BoN) are not comparable to the setting in \Cref{sec:4.1}, where four samples are generated and presented simultaneously. The performance of SMC depends on the number of generated target samples $N$. When only four samples are generated, the outputs are nearly identical and unsuitable for providing multiple samples (See \Cref{fig:compare}). Meanwhile, BoN generates $N$ samples but returns only a single one, resulting in a fundamentally different use case.

These use cases are practically meaningful in certain aspects, so we further combine each method with LiDAR in an orthogonal manner. As shown in \Cref{tab:main6}, applying LiDAR to SMC, BoN, and SMC (+ BoN) improves performance. Although LiDAR requires lookahead sampling, \Cref{fig:6} shows that it achieves better efficiency–performance trade-offs. LiDAR becomes more efficient as $N$ increases because the $n$ lookahead samples could be shared across the $N$ target samples. The sampling time converges to that of the vanilla sampling for large $N$, as shown in the right panel of \Cref{fig:6}. This explains why LiDAR is well-suited to being combined with methods such as SMC and BoN, which generate a large number of samples from the outset.

\setlength{\tabcolsep}{2pt}
\begin{table}[t]
\small
 \vspace{-0mm}
    \centering
    \caption{ImageReward performance with LiDAR (DPM-5, $n=50$) orthogonally applied to target particle interaction methods. Since performance varies with the number of generated samples $N$ on these methods, we report results for multiple values of $N$.}
     \vspace{-0mm}
          \begin{tabular}{lccccccc}
        \toprule
            Method&1&2&4&8 & 
             16& 32 &   64  \\
            \midrule
            SMC & - 0.001 &0.267& 0.568 &0.864&1.033&1.129&1.221 \\
            \rowcolor{gray!25} + LiDAR & \textbf{0.384} &\textbf{0.715}& \textbf{1.007} &\textbf{1.122}&\textbf{1.234}&\textbf{1.307}&\textbf{1.338}  \\
            \midrule
             BoN& - 0.001 &0.414& 0.739 &0.948&1.133&1.256&1.346 \\
            \rowcolor{gray!25}+ LiDAR & \textbf{0.384} &\textbf{0.749}& \textbf{0.969} &\textbf{1.136}&\textbf{1.277}&\textbf{1.351}&\textbf{1.418} \\
            \midrule
            SMC (+ BoN)& - 0.001  &0.336& 0.692 &1.015&1.201&1.331&1.438 \\
            \rowcolor{gray!25}+ LiDAR & \textbf{0.384}  &\textbf{0.770}& \textbf{1.073} &\textbf{1.218}&\textbf{1.354}&\textbf{1.452}&\textbf{1.510} \\
        \bottomrule
    \end{tabular}
    \vspace{-0mm}
    \label{tab:main6}
\end{table}
\begin{figure}[t]
    \centering
    \begin{minipage}{0.235\textwidth}
        \centering
         \includegraphics[height=2.5cm]{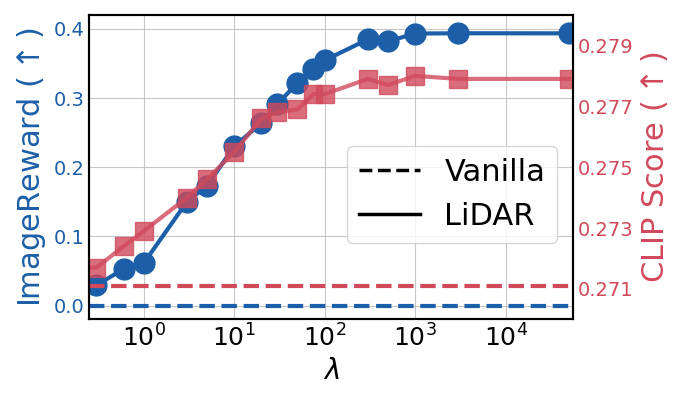}
    \end{minipage}
    \begin{minipage}{0.235\textwidth}
        \centering
         \includegraphics[height=2.5cm]{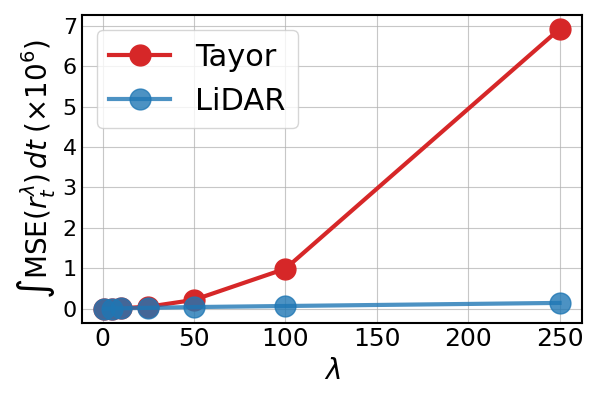}
    \end{minipage}
    \vspace{-0mm}
    \caption{(Left) Ablation on $\lambda$ for LiDAR. (Right) EFR approximation error of Taylor and LiDAR according to $\lambda$.}
    \label{fig:7}
    \vspace{-0mm}
\end{figure}

\section{Conclusion}
We propose LiDAR, a test-time scaling method for diffusion models that consists of lookahead sampling with reward-guided target sampling. LiDAR systematically addresses the efficiency and reward-hacking limitations of gradient guidance methods. By decoupling the neural dependency between the expected future reward and $\rvx_t$, LiDAR enables derivative-free guidance without relying on SMC. LiDAR exhibits strong scaling behavior with respect to both lookahead accuracy and the number of lookahead samples, achieving significantly improved efficiency–performance trade-offs compared to existing test-time scaling methods.

\section*{Impact Statement}
The primary goal of this work is to improve human alignment in the diffusion sampling process. The proposed method may have potential applications in creative domains such as digital media, visual content generation, and virtual environments. At the same time, it is important to consider the ethical use of AI-generated content and the risk of producing misleading or harmful material. Care should be taken to avoid applying the proposed approach in conjunction with harmful or poorly designed reward functions. Possible mitigation strategies include integrating protection mechanisms and invisible watermarking to reduce potential misuse.

\section*{Acknowledgements}
This work was supported by the IITP (Institute of Information \& Communications Technology Planning \& Evaluation)-ITRC (Information Technology Research Center) grant funded by the Korea government (Ministry of Science and ICT) (IITP-2026-RS-2024-00437268) (50\%). This research was supported by AI Technology Development for Commonsense Extraction, Reasoning, and Inference from Heterogeneous Data(IITP) funded by the Ministry of Science and ICT(RS-2022-II220077) (50\%).

\nocite{langley00}

\bibliography{example_paper}
\bibliographystyle{icml2026}

\newpage
\appendix
\onecolumn

\section{Proofs and Derivations}
\label{sec:A}
\subsection{Derivation of Intermediate Stein Score in \cref{eq:process_target}}
\label{sec:A.1}
We have followings in \cref{eq:episode_target}:
\[
p_{\theta}^{r}(\rvx_0|\rvc) \propto p_{\theta}(\rvx_0|\rvc)\text{exp}\left(\lambda \cdot r(\rvx_0,\rvc)\right).
\]

By marginalizing over $\rvx_0$, the marginal distribution at time $t$ is
\begin{align*}
p_{\theta}^{r}(\rvx_t|\rvc)
&= \int p_{\theta}^{r}(\rvx_0|\rvc)\,p(\rvx_t|\rvx_0)\,d\rvx_0 \\
&\propto \int p_{\theta}(\rvx_0|\rvc)\exp\left(\lambda \cdot r(\rvx_0,\rvc)\right)
\,p(\rvx_t|\rvx_0)\,d\rvx_0 .
\end{align*}

Using Bayes' rule and the definition of a forward process,
\begin{align*}
p(\rvx_t|\rvx_0) = p(\rvx_t|\rvx_0,\rvc)
&= \frac{p_{\theta}(\rvx_0|\rvx_t,\rvc)\,p_{\theta}(\rvx_t|\rvc)}
{p_{\theta}(\rvx_0|\rvc)},
\end{align*}
we obtain
\begin{align*}
p_{\theta}^{r}(\rvx_t|\rvc)
&\propto \int p_{\theta}(\rvx_0|\rvc)\exp\left(\lambda \cdot r(\rvx_0,\rvc)\right)
\frac{p_{\theta}(\rvx_0|\rvx_t,\rvc)\,p_{\theta}(\rvx_t|\rvc)}
{p_{\theta}(\rvx_0|\rvc)}\,d\rvx_0 \\
&= p_{\theta}(\rvx_t|\rvc)
\int \exp\left(\lambda \cdot r(\rvx_0,\rvc)\right)
p_{\theta}(\rvx_0|\rvx_t,\rvc)\,d\rvx_0 \\
&= p_{\theta}(\rvx_t|\rvc)\,
\mathbb{E}_{p_{\theta}(\rvx_0|\rvx_t,\rvc)}
\left[\exp\left(\lambda \cdot r(\rvx_0,\rvc)\right)\right].
\end{align*}

Taking the gradient of the log-density w.r.t to $\rvx_t$, we have
\begin{align*}
\nabla_{\rvx_t}\log p_{\theta}^{r}(\rvx_t|\rvc)
&= \nabla_{\rvx_t}\log p_{\theta}(\rvx_t|\rvc)
+ \nabla_{\rvx_t}\log
\mathbb{E}_{p_{\theta}(\rvx_0|\rvx_t,\rvc)}
\left[\exp\left(\lambda \cdot r(\rvx_0,\rvc)\right)\right] \\
&= \score(\rvx_t,t,\rvc)
+ \nabla_{\rvx_t}\log
\mathbb{E}_{p_{\theta}(\rvx_0|\rvx_t,\rvc)}
\left[\exp\left(\lambda \cdot r(\rvx_0,\rvc)\right)\right].
\end{align*}

\subsection{Proof of \texorpdfstring{\cref{thma}}{Theorem 3.1}}
\label{sec:A.2}
\thma*

\begin{proof}
We have
\begin{align*}
r^{\lambda}_t(\rvx_t, \rvc)
&=\log\mathbb{E}_{p_{\theta}(\rvx_0|\rvx_t,\rvc)}
\left[ \exp\left(\lambda \cdot r(\rvx_0,\rvc)\right)\right] \\
&= \log \int p_{\theta}(\rvx_0|\rvx_t,\rvc)
\exp\left(\lambda \cdot r(\rvx_0,\rvc)\right)\, d\rvx_0 \\
&= \log \int \frac{p_{\theta}(\rvx_0,\rvx_t|\rvc)}
{p_{\theta}(\rvx_t|\rvc)}
\exp\left(\lambda \cdot r(\rvx_0,\rvc)\right)\, d\rvx_0 \\
&= \log \int \frac{p(\rvx_t|\rvx_0)\,p_{\theta}(\rvx_0|\rvc)}
{\int p(\rvx_t|\rvx_0)\,p_{\theta}(\rvx_0|\rvc)\, d\rvx_0}
\exp\left(\lambda \cdot r(\rvx_0,\rvc)\right)\, d\rvx_0 \\
&= \log \mathbb{E}_{p_{\theta}(\rvx_0|\rvc)}
\left[
\frac{p(\rvx_t|\rvx_0)}
{\mathbb{E}_{p_{\theta}(\rvx_0|\rvc)}[p(\rvx_t|\rvx_0)]}
\exp\left(\lambda \cdot r(\rvx_0,\rvc)\right)
\right].
\end{align*}

\end{proof}

\subsection{Proof of Theorem \texorpdfstring{\ref{propa}}{Theorem 3.3}}
\label{sec:A.4}
\propa*

\begin{proof}
We have a forward Gaussian kernel and its Stein score as:
\[
p(\rvx_t|\rvx_0)
=\frac{1}{\sigma_t\sqrt{2\pi}}
\exp\!\left(-\frac{(\rvx_t-\rvx_0)^2}{2\sigma_t^2}\right),
\qquad
\nabla_{\rvx_t}\log p(\rvx_t|\rvx_0)
=-\frac{\rvx_t-\rvx_0}{\sigma_t^2}.
\]
The definition of empirical lookahead reward is
\begin{align}
\hat{r}^{\lambda}_t(\rvx_t,\rvc):= \log{\frac{1}{n}\sum_{i=1}^{n}\left[\frac{p(\rvx_t|\hat{\rvx}^{i}_0)}{\frac{1}{n}\sum_{j=1}^{n}p(\rvx_t|\hat{\rvx}^{j}_0)} \exp\left(\lambda \cdot r(\hat{\rvx}^i_0,\rvc)\right)\right]}. \nonumber 
\end{align}

\begin{align*}
&\nabla_{\rvx_t}\hat{r}^{\lambda}_t(\rvx_t,\rvc) \\
&=\nabla_{\mathbf{x}_t}\log \frac{1}{n}\sum_{i=1}^{n}\frac{p(\mathbf{x}_t|\mathbf{\hat{x}}_0^i)}{\frac{1}{n}\sum_{j=1}^{n}p(\mathbf{x}_t|\mathbf{\hat{x}}_0^j)}\exp(\lambda\cdot r(\mathbf{\hat{x}}_0^i,\mathbf{c}))
\\
&= \nabla_{\mathbf{x}_t}\log\left[
\frac{1}{n}\sum_{i=1}^{n}
\frac{
\exp\!\left(-\frac{(\mathbf{x}_t-\mathbf{\hat{x}}_0^i)^2}{2\sigma_t^2}\right)
}{
\frac{1}{n}\sum_{j=1}^{n}\exp\!\left(-\frac{(\mathbf{x}_t-\mathbf{\hat{x}}_0^j)^2}{2\sigma_t^2}\right)
}
\exp(\lambda\cdot r(\mathbf{\hat{x}}_0^i,\mathbf{c}))
\right]
\\
&= \nabla_{\mathbf{x}_t}\log\left[
\sum_{i=1}^{n}\exp\!\left(-\frac{(\mathbf{x}_t-\mathbf{\hat{x}}_0^i)^2}{2\sigma_t^2}\right)\exp(\lambda\cdot r(\mathbf{\hat{x}}_0^i,\mathbf{c}))
\right]
-\nabla_{\mathbf{x}_t}\log\left[
\sum_{j=1}^{n}\exp\!\left(-\frac{(\mathbf{x}_t-\mathbf{\hat{x}}_0^j)^2}{2\sigma_t^2}\right)
\right]
\\
&= \frac{\sum_{i=1}^{n}\nabla_{\mathbf{x}_t}\exp\!\left((\lambda\cdot r(\mathbf{\hat{x}}_0^i,\mathbf{c}))-\frac{(\mathbf{x}_t-\mathbf{\hat{x}}_0^i)^2}{2\sigma_t^2}\right)}
{\sum_{j=1}^{n}\exp\!\left((\lambda\cdot r(\mathbf{\hat{x}}_0^j,\mathbf{c}))-\frac{(\mathbf{x}_t-\mathbf{\hat{x}}_0^j)^2}{2\sigma_t^2}\right)}
-\frac{\sum_{i=1}^{n}\nabla_{\mathbf{x}_t}\exp\!\left(-\frac{(\mathbf{x}_t-\mathbf{\hat{x}}_0^i)^2}{2\sigma_t^2}\right)}
{\sum_{j=1}^{n}\exp\!\left(-\frac{(\mathbf{x}_t-\mathbf{\hat{x}}_0^j)^2}{2\sigma_t^2}\right)}
\\
&= \frac{\sum_{i=1}^{n}\exp\!\left((\lambda\cdot r(\mathbf{\hat{x}}_0^i,\mathbf{c}))-\frac{(\mathbf{x}_t-\mathbf{\hat{x}}_0^i)^2}{2\sigma_t^2}\right)
\nabla_{\mathbf{x}_t}\left((\lambda\cdot r(\mathbf{\hat{x}}_0^i,\mathbf{c}))-\frac{(\mathbf{x}_t-\mathbf{\hat{x}}_0^i)^2}{2\sigma_t^2}\right)}
{\sum_{j=1}^{n}\exp\!\left((\lambda\cdot r(\mathbf{\hat{x}}_0^j,\mathbf{c}))-\frac{(\mathbf{x}_t-\mathbf{\hat{x}}_0^j)^2}{2\sigma_t^2}\right)}
\\
&\qquad
-\frac{\sum_{i=1}^{n}\exp\!\left(-\frac{(\mathbf{x}_t-\mathbf{\hat{x}}_0^i)^2}{2\sigma_t^2}\right)
\nabla_{\mathbf{x}_t}\left(-\frac{(\mathbf{x}_t-\mathbf{\hat{x}}_0^i)^2}{2\sigma_t^2}\right)}
{\sum_{j=1}^{n}\exp\!\left(-\frac{(\mathbf{x}_t-\mathbf{\hat{x}}_0^j)^2}{2\sigma_t^2}\right)}
\\
&= \sum_{i=1}^{n}w_{i}^{r}
\nabla_{\mathbf{x}_t}\left((\lambda\cdot r(\mathbf{\hat{x}}_0^i,\mathbf{c}))-\frac{(\mathbf{x}_t-\mathbf{\hat{x}}_0^i)^2}{2\sigma_t^2}\right) -\sum_{i=1}^{n}w_{i}\nabla_{\mathbf{x}_t}\left(-\frac{(\mathbf{x}_t-\mathbf{\hat{x}}_0^i)^2}{2\sigma_t^2}\right)
\\
&= \sum_{i=1}^{n}w_{i}^{r}\nabla_{\mathbf{x}_t}\left(-\frac{(\mathbf{x}_t-\mathbf{\hat{x}}_0^i)^2}{2\sigma_t^2}\right) -\sum_{i=1}^{n}w_{i}
\nabla_{\mathbf{x}_t}\left(-\frac{(\mathbf{x}_t-\mathbf{\hat{x}}_0^i)^2}{2\sigma_t^2}\right)
\\
&= \sum_{i=1}^{n}\left(w_{i}^{r} - w_{i}\right)
\nabla_{\mathbf{x}_t}\left(-\frac{(\mathbf{x}_t-\mathbf{\hat{x}}_0^i)^2}{2\sigma_t^2}\right)
\\
&= \sum_{i=1}^{n}\left(w_{i}^{r} - w_{i}\right)\left(\frac{1}{\sigma_t^2}\right)(\mathbf{\hat{x}}_0^i-\mathbf{x}_t)
\\
&= \sum_{i=1}^{n}\left(w_{i}^{r} - w_{i}\right)\left(\frac{1}{\sigma_t^2}\right)(\mathbf{\hat{x}}_0^i)
\\[6pt]
\end{align*}

\end{proof}

\subsection{Proof of Theorem \texorpdfstring{\ref{thmb}}{Theorem 3.4}}
\label{sec:A.3}
\thmb*
\begin{proof}
We define the reward-tilted distribution as $q^r(\rvx_0|\rvc) \propto q(\rvx_0|\rvc) \text{exp}\left(\lambda \cdot r(\rvx_0,\rvc)\right)$. Similar with Section~\ref{sec:A.1}, we have the reward-tilted marginal distribution as follows: $q^r(\rvx_t|\rvc) \propto q(\rvx_t|\rvc) \mathbb{E}_{q(\rvx_0|\rvx_t,\rvc)} \left[ \text{exp}\left(\lambda \cdot r(\rvx_0,\rvc)\right) \right]$. Let the time partition be $0 = t_1 < t_2 < \cdots < t_\delta = T$. $\{ \rvx_{t_{k}} \}_{k=0}^{\delta}$ denotes the set of the Euler-Maruyama method with $q$ i.e., $\rvx_{t_k} = \rvx_{t_{k+1}} + \left( f_{\sigma}(\rvx_{t_{k+1}}, t_{k+1}) - g_{\sigma}^2(t_{k+1}) \nabla_{\rvx_{t_{k+1}}} \log q(\rvx_{t_{k+1}}) \right) \delta + g_{\sigma}(t_{k+1})\sqrt{\delta} \cdot Z_{k+1}$ where $Z_{k+1} \sim \mathcal{N}(Z_{k+1};0, \mathbf{I})$. Also, let $\{ \hat{\rvx}_{t} \}_{t \in [0, T]}$ be a continuous-time process from $\{ \rvx_{t_{k}} \}_{k=0}^{\delta}$ where $\hat{\rvx}_{t} \coloneq \rvx_{t_{k}}$ for $t_k \leq t < t_{k+1}$.

We make the following assumptions, some of which are widely used~\cite{song2019generative,song2021maximum}:
\begin{enumerate}
    \item $f_\sigma(\rvx_t,t)$ and $g_{\sigma}(t)$ satisfy Lipschitz continuity and linear growth conditions.
    \item $\exists g_{max}, \forall t: g_{\sigma}(t) \leq g_{\max}$.
    \item $\exists L, \forall \rvx, \rvy, \rvc: \left\| \nabla_{\rvx} \log p_{\param}(\rvx|\rvc) - \nabla_{\rvy} \log p_{\param}(\rvy|\rvc) \right\|_2 \leq L\cdot \left\| \rvx - \rvy \right\|_2$.
    \item $\exists r_{min},r_{max}, \forall \rvx_0, \rvc: r_{min} \leq r(\rvx_0,\rvc) \leq r_{max}$.
    \item Assumptions of Theorem 4 in~\citet{kim2022maximum}.
    \item $\forall k,\rvc: \nabla_{t_k} \log p_{\param}(\rvx_{t_k}|\rvc) = \nabla_{t_k} \log q(\rvx_{t_k}|\rvc)$.
\end{enumerate}
Furthermore, we define some notations for brevity as follows: $w(\rvx_0,\rvc) \coloneq \text{exp}\left(\lambda \cdot r(\rvx_0,\rvc)\right)$, $m \coloneq \text{exp}\left(\lambda \cdot r_{min}\right)$, $M \coloneq \text{exp}\left(\lambda \cdot r_{max}\right)$, $Z_p(\rvc) \coloneq \int p_{\param}(\rvx_0|\rvc)w(\rvx_0,\rvc)d\rvx_0$, and $Z_q(\rvc) \coloneq \int q(\rvx_0|\rvc)w(\rvx_0,\rvc)d\rvx_0$. Note that $w(\rvx_0,\rvc), m, M, Z_p(\rvc),Z_q(\rvc) \geq 0$.

We begin the proof using $q^r(\rvx_0|\rvc)$ and the triangle inequality for the upper bound of the lookahead error:
\begin{align}
    D_{TV}(p^{r}_{\param}(\rvx_0|\rvc)||\tilde{p}_{\param}^r(\rvx_0|\rvc))
    &\leq D_{TV}(p^{r}_{\param}(\rvx_0|\rvc)||q^r(\rvx_0|\rvc)) + D_{TV}(q^{r}(\rvx_0|\rvc)||\tilde{p}_{\param}^r(\rvx_0|\rvc)) \\
    &= D_{TV}(p^{r}_{\param}(\rvx_0|\rvc)||q^r(\rvx_0|\rvc)) + D_{TV}(\tilde{p}^{r}_{\param}(\rvx_0|\rvc)||q^r(\rvx_0|\rvc))
\end{align}
Now, we will investigate each term to determine the relationship between the lookahead error and the discretization steps $\delta$.

\paragraph{First term $D_{TV}(p^{r}_{\param}(\rvx_0|\rvc)||q^r(\rvx_0|\rvc))$.}
\begin{align}
    D_{TV}(p^{r}_{\param}(\rvx_0|\rvc)||q^r(\rvx_0|\rvc)) 
    &= \frac{1}{2} \int \left| \frac{p_{\param}(\rvx_0|\rvc)w(\rvx_0,\rvc)}{Z_p(\rvc)} - \frac{q(\rvx_0|\rvc)w(\rvx_0,\rvc)}{Z_q(\rvc)} \right| d\rvx_0 \\
    &= \frac{1}{2} \int \left| \frac{w(\rvx_0,\rvc)\{p_{\param}(\rvx_0|\rvc) - q(\rvx_0|\rvc)\}}{Z_p(\rvc)} + q(\rvx_0|\rvc)w(\rvx_0,\rvc) \left\{ \frac{1}{Z_p(\rvc)} - \frac{1}{Z_q(\rvc)} \right\} \right| d\rvx_0 \\
    &\le \frac{1}{2} \int \left| \frac{w(\rvx_0,\rvc)\{p_{\param}(\rvx_0|\rvc) - q(\rvx_0|\rvc)\}}{Z_p(\rvc)} \right| d\rvx_0 + \frac{1}{2} \int \left| q(\rvx_0|\rvc)w(\rvx_0,\rvc) \left\{ \frac{1}{Z_p(\rvc)} - \frac{1}{Z_q(\rvc)} \right\} \right| d\rvx_0 \label{thm3.3: eq1}\\
    &= \frac{1}{2Z_p(\rvc)} \int w(\rvx_0,\rvc) \left| p_{\param}(\rvx_0|\rvc) - q(\rvx_0|\rvc) \right| d\rvx_0 + \frac{1}{2} \left| \frac{1}{Z_p(\rvc)} - \frac{1}{Z_q(\rvc)} \right| \int q(\rvx_0|\rvc)w(\rvx_0,\rvc) d\rvx_0 \\
    &= \frac{1}{2Z_p(\rvc)} \int w(\rvx_0,\rvc) \left| p_{\param}(\rvx_0|\rvc) - q(\rvx_0|\rvc) \right| d\rvx_0 + \frac{1}{2Z_p(\rvc)} \left| Z_p(\rvc) - Z_q(\rvc) \right| \\
    &\le \frac{1}{Z_p(\rvc)} \int w(\rvx_0,\rvc) \left| p_{\param}(\rvx_0|\rvc) - q(\rvx_0|\rvc) \right| d\rvx_0 \label{thm3.3: eq2} \\
    &\le \frac{M}{m} \int \left| p_{\param}(\rvx_0|\rvc) - q(\rvx_0|\rvc) \right| d\rvx_0 = \frac{2M}{m} D_{TV}(p_{\param}(\rvx_0|\rvc)||q(\rvx_0|\rvc)) \label{thm3.3: eq3}
\end{align}
\cref{thm3.3: eq1} is due to the triangle inequality and \cref{thm3.3: eq2} is due to $\left| Z_p(\rvc) - Z_q(\rvc) \right| = \left| \int w(\rvx_0,\rvc) \{p_{\param}(\rvx_0|\rvc) - q(\rvx_0|\rvc) \}d\rvx_0 \right| \leq \int w(\rvx_0,\rvc) \left|p_{\param}(\rvx_0|\rvc) - q(\rvx_0|\rvc)\right|d\rvx_0$. \cref{thm3.3: eq3} holds due to $Z_p(\rvc) = \int p_{\param}(\rvx_0|\rvc)w(\rvx_0,\rvc)d\rvx_0 \geq \int m \cdot p_{\param}(\rvx_0|\rvc)d\rvx_0 = m$ and $\int w(\rvx_0,\rvc) \left|p_{\param}(\rvx_0|\rvc) - q(\rvx_0|\rvc)\right|d\rvx_0 \leq M\cdot \int \left|p_{\param}(\rvx_0|\rvc) - q(\rvx_0|\rvc)\right|d\rvx_0$. By the Theorem 4 in~\cite{kim2022maximum}, we can conclude that: 
\begin{align}
    D_{TV}(p^{r}_{\param}(\rvx_0|\rvc)||q^r(\rvx_0|\rvc)) \leq \mathcal{O}\left(1/\sqrt{\delta}\right) \label{thm3.3: first term}
\end{align}

\paragraph{Second term $D_{TV}(\tilde{p}^{r}_{\param}(\rvx_0|\rvc)||q^r(\rvx_0|\rvc))$.} By Pinsker's inequality, we have the upper bound of the second term:
\begin{align}
    D_{TV}(\tilde{p}^{r}_{\param}(\rvx_0|\rvc)||q^r(\rvx_0|\rvc)) \leq \sqrt{\frac{1}{2}D_{KL}(\tilde{p}^{r}_{\param}(\rvx_0|\rvc)||q^r(\rvx_0|\rvc))}
\end{align}

By Theorem 2 in~\citet{song2021maximum}, we have:
\begin{align}
    &D_{KL}(\tilde{p}^{r}_{\param}(\rvx_0|\rvc) || q^r(\rvx_0|\rvc)) \nonumber \\
    &= D_{KL}(\tilde{p}^{r}_{\param}(\rvx_T|\rvc) || q^{r}(\rvx_T|\rvc)) + \frac{1}{2} \int_{0}^{T} \mathbb{E}_{\tilde{p}^{r}_{\param}}(\rvx_t|\rvc) \left[ g_{\sigma}^{2}(t) \left\| \nabla_{\rvx_t} \log \tilde{p}^{r}_{\param}(\rvx_t|\rvc) - \nabla_{\rvx_t} \log q^{r}(\rvx_t|\rvc) \right\|_{2}^{2} \right] dt \\
    &= D_{KL}(\tilde{p}^{r}_{\param}(\rvx_T|\rvc) || q^{r}(\rvx_T|\rvc)) + \frac{1}{2} \int_{0}^{T} \mathbb{E}_{\tilde{p}^{r}_{\param}}(\rvx_t|\rvc) \left[ g_{\sigma}^{2}(t) \left\| \nabla_{\rvx_t} \log p_{\param}(\rvx_t|\rvc) - \nabla_{\rvx_t} \log q(\rvx_t|\rvc) \right\|_2^2 \right] dt \\
    &= D_{KL}(\tilde{p}^{r}_{\param}(\rvx_T|\rvc) || q^{r}(\rvx_T|\rvc)) + \frac{1}{2} \int_{0}^{T} \int \frac{p_{\param}(\rvx_t|\rvc) \mathbb{E}_{q(\rvx_0|\rvx_t)}[w(\rvx_0,\rvc)]}{Z(\rvc)} g^2(t) \left\| \nabla_{\rvx_t} \log p_{\param}(\rvx_t|\rvc) - \nabla_{\rvx_t} \log q(\rvx_t|\rvc) \right\|_{2}^{2} d\rvx_t dt \\
    &\le D_{KL}(\tilde{p}^{r}_{\param}(\rvx_T|\rvc) || q^{r}(\rvx_T|\rvc)) + \frac{1}{2} \int_{0}^{T} \frac{M}{m} \int p_{\param}(\rvx_t|\rvc) g_{\sigma}^{2}(t) \left\| \nabla_{\rvx_t} \log p_{\param}(\rvx_t|\rvc) - \nabla_{\rvx_t} \log q(\rvx_t|\rvc) \right\|_{2}^{2} d\rvx_t dt \label{thm3.3: eq4}\\
    &= D_{KL}(\tilde{p}^{r}_{\param}(\rvx_T|\rvc) || q^{r}(\rvx_T|\rvc)) + \frac{M}{m} \mathcal{L}_{SM}(p_{\param}(\rvx_0|\rvc), q(\rvx_0|\rvc);g_{\sigma}^{2})
\end{align}
where $\mathcal{L}_{SM}(p_{\param}(\rvx_0|\rvc), q(\rvx_0|\rvc);g_{\sigma}^{2}) \coloneq \frac{1}{2}  \int_{0}^{T} \mathbb{E}_{p_{\param}(\rvx_t|\rvc)} \left[ g_{\sigma}^2(t) \left\| \nabla_{\rvx_t} \log p_{\param}(\rvx_t|\rvc) - \nabla_{\rvx_t} \log q(\rvx_t|\rvc) \right\|_{2}^{2} \right] dt$ and $Z(\rvc) \coloneq \int p_{\param}(\rvx_t|\rvc) \mathbb{E}_{q(\rvx_0|\rvx_t)}[w(\rvx_0,\rvc)]d\rvx_0$. \cref{thm3.3: eq4} holds due to $m \leq \mathbb{E}_{q(\rvx_0|\rvx_t)}[w(\rvx_0,\rvc)] \leq M$ and $Z(\rvc) = \int p_{\param}(\rvx_t|\rvc)\mathbb{E}_{q(\rvx_0|\rvx_t)}[w(\rvx_0,\rvc)]d\rvx_0 \geq \int m \cdot p_{\param}(\rvx_t|\rvc)d\rvx_0 = m$.

Then, we can get the upper bound of $\mathcal{L}_{SM}$ as follows:
\begin{align}
    \mathcal{L}_{SM}(p_{\param}(\rvx_0|\rvc), q(\rvx_0|\rvc);g_{\sigma}^{2})
    &= \frac{1}{2} \sum_{k=1}^{\delta} \int_{t_{k-1}}^{t_k} \mathbb{E}_{p_{\param}(\rvx_t|\rvc)} \left[ g_{\sigma}^{2}(t) \left\| \nabla_{\rvx_t} \log p_{\param}(\rvx_t|\rvc) - \nabla_{\rvx_t} \log q(\rvx_t|\rvc) \right\|_{2}^{2} \right] dt \\
    &= \frac{1}{2} \sum_{k=1}^{\delta} \int_{t_{k-1}}^{t_k} \mathbb{E}_{p_{\param}(\rvx_t|\rvc)} \left[ g_{\sigma}^2(t) \left\| \nabla_{\rvx_t} \log p_{\param}(\rvx_t|\rvc) - \nabla_{\rvx_{t_k}} \log q(\rvx_{t_k}|\rvc) \right\|_{2}^{2} \right] dt \\
    &= \frac{1}{2} \sum_{k=1}^{\delta} \int_{t_{k-1}}^{t_k} \mathbb{E}_{p_{\param}(\rvx_t|\rvc)} \left[ g_{\sigma}^2(t) \left\| \nabla_{\rvx_t} \log p_{\param}(\rvx_t|\rvc) - \nabla_{\rvx_{t_k}} \log p_{\param}(\rvx_{t_k}\rvc) \right\|_{2}^{2} \right] dt \label{thm3.3: eq5} \\
    &\leq \frac{1}{2} g_{max}^2 L^2 \sum_{k=1}^{\delta} \int_{t_{k-1}}^{t_k} \mathbb{E}_{p_{\param}(\rvx_t|\rvc)} \left\| \rvx_t - \rvx_{t_k} \right\|_2^2 dt \label{thm3.3: eq6} \\
    &\leq \frac{1}{2} g_{max}^2 L^2 \sum_{k=1}^{\delta} \int_{t_{k-1}}^{t_k} \sup_{u \in [0,T]} \mathbb{E}_{p_{\param}(\rvx_u|\rvc)} \left\| \rvx_u - \hat{\rvx}_u \right\|_2^2 dt \\
    &\leq \frac{1}{2} g_{max}^2 L^2 \sum_{k=1}^{\delta} \int_{t_{k-1}}^{t_k} C \frac{T}{\delta}dt \label{thm3.3: eq7} \\
    &= \frac{1}{2} g_{max}^2 L^2 CT^2 \frac{1}{\delta}
\end{align}
\cref{thm3.3: eq5} is due to assumption 6, and \cref{thm3.3: eq6} is due to assumptions 2 and 3. \cref{thm3.3: eq7} holds due to the strong convergence of Euler-Maruyama method~\cite{kloeden1977numerical, higham2001algorithmic, ngo2016strong}. Therefore, we have $D_{KL}(\tilde{p}^{r}_{\param}(\rvx_0|\rvc) || q^r(\rvx_0|\rvc)) \leq \mathcal{O}(1/\delta)$ and can conclude that:
\begin{align}
    D_{TV}(\tilde{p}^{r}_{\param}(\rvx_0|\rvc)||q^r(\rvx_0|\rvc)) \leq \mathcal{O}\left(1/\sqrt{\delta}\right) \label{thm3.3: second term}
\end{align}

In conclusion, combining \cref{thm3.3: first term} and \cref{thm3.3: second term} completes the proof.
\end{proof}

\subsection{Proof of Theorem \texorpdfstring{\ref{thmc}}{Theorem 3.5}}
\label{sec:A.6}
\thmc*
\begin{proof}
Recall the definitions:
{\small
\[ \tilde{r}^{\lambda}_t(\rvx_t,\rvc) = \log{\mathbb{E}_{q(\rvx_0|\rvc)}\left[\frac{p(\rvx_t|\rvx_0)}{\mathbb{E}_{q(\rvx_0|\rvc)}[p(\rvx_t|\rvx_0)]} \exp\left(\lambda \cdot r(\rvx_0,\rvc)\right)\right]},
\quad \hat{r}^{\lambda}_t(\rvx_t,\rvc):=
 \log{\frac{1}{n}\sum_{i=1}^{n}\left[\frac{p(\rvx_t|\hat{\rvx}^{i}_0)}{\frac{1}{n}\sum_{j=1}^{n}p(\rvx_t|\hat{\rvx}^{j}_0)} \exp\left(\lambda \cdot r(\hat{\rvx}^i_0,\rvc)\right)\right]}.\]
}

We derive this result similarly to Theorem 1 in STF~\cite{xu2023stable} and make the following assumption:
\begin{enumerate}
    \item Without loss of generality, suppose that $p(\rvx_t|\hat{\rvx}^{j}_0)$, $j= 2,...,n$  are i.i.d. by conditioning with $\rvx_t \sim p(\rvx_t|\hat{\rvx}^{1}_0)$.
    \item Suppose that $0 < \mathbb{E}_{q(\rvx_0|\rvc)}[p(\rvx_t|\rvx_0)] < \infty$.
    \item Suppose that $\textrm{Var}(p(\rvx_t|\rvx_0)\exp\left(\lambda \cdot r(\rvx_0,\rvc)\right)) < \infty$.
\end{enumerate}
First, consider denominator term of $\exp(\hat{r}^{\lambda}_t(\rvx_t,\rvc))$.
When $n\rightarrow \infty$, by the Weak Law of Large Numbers (WLLN),
\[
\frac{1}{n}\sum_{j=1}^{n} p(\rvx_t \mid \hat{\rvx}_0^{\,j})
=
\frac{p(\rvx_t \mid \hat{\rvx}_0^{\,1})}{n}
+
\frac{n-1}{n}
\cdot
\frac{1}{n-1}
\sum_{j=2}^{n}
p(\rvx_t \mid \hat{\rvx}_0^{\,j})
\xrightarrow{p}
\mathbb{E}_{q(\rvx_0\mid \rvc)}
\!\left[
p(\rvx_t \mid \rvx_0)
\right].
\]
Next, consider the remaining term of 
$\exp(\hat{r}^{\lambda}_t(\rvx_t,\rvc))$. By Central Limit Theorem (CLT):
{\small
\[
\sqrt{n}
\left(
\frac{1}{n}
\sum_{i=1}^{n}
p(\rvx_t \mid \hat{\rvx}_0^{\,i})
\exp\!\bigl(\lambda \cdot r(\hat{\rvx}_0^{\,i}, \rvc)\bigr)
-
\mathbb{E}_{q(\rvx_0\mid\rvc)}
\!\left[
p(\rvx_t \mid \rvx_0)
\exp\!\bigl(\lambda \cdot r(\rvx_0,\rvc)\bigr)
\right]
\right)
\xrightarrow{d}
\mathcal{N}
\left(
0,
\operatorname{Var}
\!\left(
p(\rvx_t \mid \rvx_0)
\exp\!\bigl(\lambda \cdot r(\rvx_0,\rvc)\bigr)
\right)
\right).
\]
}
Putting them together via Slutsky’s theorem, we have:
\[
\sqrt{n}
\left(
\exp\!\bigl(\hat r_t^\lambda(\rvx_t,\rvc)\bigr)
-
\exp\!\bigl(\tilde r_t^\lambda(\rvx_t,\rvc)\bigr)
\right)
\xrightarrow{d}
\mathcal{N}
\left(
0,
\frac{
\operatorname{Var}
\!\left(
p(\rvx_t \mid \rvx_0)
\exp\!\bigl(\lambda \cdot r(\rvx_0,\rvc)\bigr)
\right)
}{
\left\{
\mathbb{E}_{q(\rvx_0\mid\rvc)}
\!\left[
p(\rvx_t \mid \rvx_0)
\right]
\right\}^{2}
}
\right).
\]
Finally, by the delta method with $g(x):=\log(x)$, we conclude the proof.
\[
\tilde{r}_{t}^{\lambda}(\mathbf{x}_t,\mathbf{c})
- \hat{r}_{t}^{\lambda}(\mathbf{x}_t,\mathbf{c})  \overset{d}{\to}
\mathcal{N}
\left(
0,
\frac{
\operatorname{Var}
\left(
p(\mathbf{x}_t \mid \mathbf{x}_0)
\exp\!\left( \lambda \cdot r(\mathbf{x}_0,\mathbf{c})
\right)
\right)
}{
\sqrt{n}\left(
\mathbb{E}_{q(\mathbf{x}_0 \mid \mathbf{c})}
\left[
p(\mathbf{x}_t \mid \mathbf{x}_0)
\right] \exp\!\left(
\tilde{r}_{t}^{\lambda}(\mathbf{x}_t,\mathbf{c})
\right)
\right)^2
}
\right).\nonumber
\]
\end{proof}

\subsection{Analysis on Taylor Approximation Error}
\label{sec:A.5}

In gradient guidance methods~\cite{bansal2024universal, na2025diffusion}, the reward function is approximated
via a first-order Taylor expansion around $\bar{x}_0=\mathbb{E}_{p_{\theta}(\rvx_0|\rvx_t,\rvc)}[\rvx_0]$,
which is obtained from Tweedie's formula.

\begin{align}
r(\rvx_0, c)
&= r(\bar{\rvx}_0, c)
+ (\rvx_0 - \bar{\rvx}_0)^\top \nabla_{\rvx_0} r(\bar{\rvx}_0, c)
+ O(\|\rvx_0 - \bar{\rvx}_0\|^2)
\end{align}

The term $O(\|\rvx_0 - \bar{\rvx}_0\|^2)$ represents the higher-order error terms
beyond the first-order expansion.
Under this approximation, the expected future reward
$r_t^{\lambda}(\rvx_t, \rvc)$ defined in Eq.~\eqref{eq:process_target}
can be expanded as follows:

\begin{align}
r_t^{\lambda}(\rvx_t, \rvc) &= 
\log \mathbb{E}_{p_\theta(\rvx_0 \mid \rvx_t, c)}
\big[ \exp(\lambda\cdot r(\rvx_0, c)) \big] \\
&=  
\log \mathbb{E}_{p_\theta(\rvx_0 \mid \rvx_t, c)}
\Big[
\exp\big(
\lambda \cdot \big(
r(\bar{\rvx}_0, c)
+ (\rvx_0 - \bar{\rvx}_0)^\top \nabla_{\rvx_0} r(\bar{\rvx}_0, c)
+ O(\|\rvx_0 - \bar{\rvx}_0\|^2)
\big)
\big)
\Big] \\
&= \log \mathbb{E}_{p_\theta(\rvx_0 \mid \rvx_t, c)}
\Big[
\exp(\lambda\cdot r(\bar{\rvx}_0, c)) \,
\exp\big(
\lambda \cdot \big(
(\rvx_0 - \bar{\rvx}_0)^\top \nabla_{\rvx_0} r(\bar{\rvx}_0, c)
+ O(\|\rvx_0 - \bar{\rvx}_0\|^2)
\big)
\big)
\Big] \\
&= \log \Big(
\exp(\lambda\cdot r(\bar{\rvx}_0, c)) \,
\mathbb{E}_{p_\theta(\rvx_0 \mid \rvx_t, c)}
\big[
\exp\big(
\lambda \cdot \big(
(\rvx_0 - \bar{\rvx}_0)^\top \nabla_{\rvx_0} r(\bar{\rvx}_0, c)
+ O(\|\rvx_0 - \bar{\rvx}_0\|^2)
\big)
\big)
\big]
\Big) \\
&= \underbrace{\lambda \cdot r(\bar{\rvx}_0, c)}_{\text{Taylor approximation}}
+ \underbrace{\log \mathbb{E}_{p_\theta(\rvx_0 \mid \rvx_t, c)}
\Big[
\exp\big(
\lambda \cdot \big(
(\rvx_0 - \bar{\rvx}_0)^\top \nabla_{\rvx_0} r(\bar{\rvx}_0, c)
+ O(\|\rvx_0 - \bar{\rvx}_0\|^2)
\big)
\big)
\Big]}_{\text{Taylor approximation error} \, (:= \epsilon_{\mathrm{Taylor}})}.
\end{align}

We denote the error induced by the Taylor approximation as
$\epsilon_{\mathrm{Taylor}}$.
Then, by Jensen's inequality, 

\begin{align}
\epsilon_{\mathrm{Taylor}}
&\ge
\mathbb{E}_{p_\theta(\rvx_0 \mid \rvx_t, c)}
\Big[
\lambda \cdot \big(
(\rvx_0 - \bar{\rvx}_0)^\top \nabla_{\rvx_0} r(\bar{\rvx}_0, c)
+ O(\|\rvx_0 - \bar{\rvx}_0\|^2)
\big)
\Big] \\
&=
\lambda \cdot \underbrace{\,
\mathbb{E}_{p_\theta(\rvx_0 \mid \rvx_t, c)}
\Big[
(\rvx_0 - \bar{\rvx}_0)^\top \nabla_{\rvx_0} r(\bar{\rvx}_0, c)
+ O(\|\rvx_0 - \bar{\rvx}_0\|^2)
\Big]}_{\text{Constant with respect to $\lambda$}} \\
&= \lambda\cdot C.
\end{align}

When the reward function is approximated via a Taylor expansion, the resulting error in the expected future reward $\epsilon_{\mathrm{Taylor}}$ is lower bounded by $\lambda \cdot C$ for some constant $C$, where $C$ is a constant independent of $\lambda$.
Consequently, as $\lambda$ increases, i.e., as the guidance strength becomes stronger, this approximation error inevitably grows. For this reason, gradient guidance methods typically cannot employ
a large guidance strength without incurring significant approximation errors.

\subsection{Extension to Discrete Diffusion Models}
\label{sec:A.7}
Consider $\rvx_0 := [x_0^1, \ldots, x_0^L]$ as a discrete sequence to be generated. The denoising formulation corresponding to \cref{eq:process_target} is given by:
\begin{align}
p_{\param}^r(\rvx_{t} | \rvc, \rvx_{t+1})
&\propto
p_{\param}(\rvx_{t}| \rvc, \rvx_{t+1})
\cdot
\exp\!\left(
r_{t}^{\lambda}(\rvx_{t}, \rvc)
-
r_{t+1}^{\lambda}(\rvx_{t+1}, \rvc)
\right) \nonumber\\
&\propto
p_{\param}(\rvx_{t}| \rvc, \rvx_{t+1})
\cdot
\exp\!\left(
r_{t}^{\lambda}(\rvx_{t}, \rvc)
\right) \nonumber\\
&\approx
p_{\param}(\rvx_{t}| \rvc, \rvx_{t+1})
\cdot
\exp\!\left(
\hat{r}_{t}^{\lambda}(\rvx_{t}, \rvc).
\right) \nonumber
\end{align}
The discrete diffusion model generally generates each token independently, and following Discrete Classifier Guidance~\cite{schiff2025simple}, we also compute the empirical reward term in a token-independent manner.
\begin{align}
p_{\param}^{r}(x_{t}^{(l)} | \rvc, \rvx_{t+1}) \propto \underbrace{p_{\param}(x_{t}^{(l)}| \rvc, \rvx_{t+1})}_{\text{Base Diffusion}}
\cdot
\exp\!\left(
\hat{r}_{t}^{\lambda}(x_{t}^{(l)}, \rvc).
\right) \nonumber
\end{align}
We derive the token-independent reward term as follows:
\begin{align}
\hat{r}_t^{\lambda}\!\left(x_t^{(l)}, \rvc\right)
&:=
\log \frac{1}{n}
\sum_{i=1}^{n}
\left[
\frac{
p\!\left(x_t^{(l)} \mid \hat{\rvx}_0^{(1:L),i}\right)
}{
\frac{1}{n}
\sum_{j=1}^{n}
p\!\left(x_t^{(l)} \mid \hat{\rvx}_0^{(1:L),j}\right)
}
\exp\!\left(
\lambda \cdot r\!\left(\hat{\rvx}_0^{(1:L),i}, \rvc\right)
\right)
\right] \\
&=
\log \frac{1}{n}
\sum_{i=1}^{n}
\left[
\frac{
p\!\left(x_t^{(l)} \mid \hat{\rvx}_0^{(l),i}\right)
}{
\frac{1}{n}
\sum_{j=1}^{n}
p\!\left(x_t^{(l)} \mid \hat{\rvx}_0^{(l),j}\right)
}
\exp\!\left(
\lambda \cdot r\!\left(\hat{\rvx}_0^{(1:L),i}, \rvc\right)
\right)
\right].
\end{align}
Although rewards are assigned at the sequence level, the resulting denoising process remains token-independent. We believe this may limit the ability of the guidance term to capture sequence-level information. Developing guidance mechanisms that more effectively incorporate sequence-level information into the denoising process is a promising direction for future work.

\newpage
\section{Experimental Setting}
\label{sec:b}
\subsection{Additional Details on LiDAR Configuration}
We use $\lambda=5000$ in all experiments unless otherwise specified, as the left panel of \Cref{fig:7,fig:lambda_geneval} shows that sufficiently large values of $\lambda$ improve all metrics. We set $s=12.5$ for SD v1.5, $s=8$ for SDXL, and $s=40$ for FLUX by default. The left two panels of \Cref{fig:1} report performance averaged over $s\in\{12.5, 15, 17.5\}$, while \Cref{fig:3.a} ablates $s\in\{0.5, 1.0, 2.5, 5.0, 7.5, 12.5, 15.0, 17.5\}$. As shown in \Cref{fig:3.a}, increasing $s$ to achieve higher IR also leads to higher validation metrics, including CLIP and HPS. Based on these observations, we select $s$ to maximize IR for evaluation on GenEval. For SDv1.5 and SDXL, we apply LiDAR only during the early stage of the denoising process, corresponding to the interval $[1.0, 0.2]$. For FLUX, LiDAR is applied only in $[1.0, 0.95]$. We use $\lambda=1.5$ and $s=1$ for UDLM experiments. We use pre-trained UDLM~\cite{schiff2025simple} diffusion model.

\subsection{Baseline Configuration}

\textbf{Universal guidance (UG)~\cite{bansal2024universal}:}

We use the Forward Universal Guidance approach, which applies gradient guidance to $\rvx_t$. \Cref{fig:3.a} shows ablation results on SD v1.5 for UG with respect to $\lambda$ over the range $\{0.5, 1.0, 2.5, 5.0, 7.5, 12.5, 15.0, 17.5\}$. The results indicate that increasing IR leads to decreased validation metrics, such as HPS. Although the validation performance is maximized at $\lambda=0$ in UG, this choice reduces to vanilla sampling. Therefore, we use the value of $\lambda$ that yields the highest IR as the default for UG. Accordingly, we set $\lambda=15$ as the default for SD v1.5 (and apply this value to the DDIM experiments). For SDXL, we evaluate $\lambda\in\{1,2,3,4,6\}$ with batch size 1 and select $\lambda=3$, which yields the highest IR; thus, we fix $\lambda=3$.

\textbf{DATE~\cite{na2025diffusion}:}

DATE partially mitigates the reward hacking observed in UG by updating the text embedding instead of $\rvx_t$, thereby providing a setting in which all validation metrics improve. For SD v1.5, we sweep $\rho$ (the counterpart of $\lambda$ in UG for the text embedding space; see Eq. 7 of DATE~\cite{na2025diffusion} for the definition) over $\{0.05, 0.1, 0.5\}$ and select the value that yields the highest IR without decreasing CLIP or HPS. DATE achieves the highest IR at $\rho=0.5$ under the maximum scaling setting; however, this choice also reduces HPS compared to vanilla sampling. In contrast, $\rho=0.1$ yields balanced improvements in IR and other validation metrics, and we therefore select this value for evaluation on GenEval. Similarly, for SDXL, we sweep $\rho$ over $\{0.5, 1, 2, 3, 6, 10\}$ with batch size 1 and select $\rho=2$.

Applying gradient updates at every time step incurs a higher computational cost than UG (58.80 sec vs. 58.36 sec on SD v1.5, and  498.96 sec vs. 334.43 sec on SDXL), since DATE requires one additional conditional Stein score evaluation with an updated text embedding at each step (Regarding memory, the backward pass requires slightly less memory, since it does not include the unconditional Stein score in CFG). However, DATE allows updates to be applied intermittently rather than at every step, enabling more efficient variants. To clearly present the efficiency–performance trade-off improvements from UG to DATE, and from DATE to LiDAR, we use DATE with text embedding updates applied every other step as the default baseline in \Cref{tab:main2}, and evaluate its scaling behavior with respect to update frequency of $\{1,2,3,5,10\}$ in \Cref{fig:3.b,fig:3.c}.

\textbf{SMC~\cite{singhal2025a}:}

We follow the SMC implementation from FK steering~\cite{singhal2025a}, which performs importance resampling every 20 denoising steps. To remain faithful to the formulation, we exclude additional heuristics such as adaptive resampling in the experiments of \Cref{tab:main6}. For the diversity comparison in \Cref{fig:9,fig:10}, adaptive resampling slightly increases diversity from 0.090 to 0.156; however, the results remain substantially more collapsed compared to LiDAR.

\textbf{Sample-based diffusion guidance~\cite{kim2025trainingfree,kirchhof2025shielded}:}

Our method performs guidance during denoising by leveraging lookahead samples.
Several methods with a sample-based guidance, such as Safe-D~\cite{kim2025trainingfree} and SR~\cite{kirchhof2025shielded}, have been introduced
in the main paper.
Both approaches rely on a reference set $\{\rvx_0^1, ...\rvx_0^K\}$ and construct guidance term $\Delta$ on $\bar{\rvx}_0:=\mathbb{E}_{\rvx_0\sim p_{\theta}(\rvx_0|\rvx_t,c)}[\rvx_0]$ that pushes samples
away from this set. This guidance is incorporated by updating the estimate as $\bar{\rvx}_0 + \Delta$.

In our setting, we construct the reference set using lookahead samples with low
reward values, and examine whether applying these guidance mechanisms can also
lead to high-reward sampling.

Safe-D~\cite{kim2025trainingfree} aims to construct a safe denoiser by leveraging unsafe samples.
In the $\rvx_0$-space, it applies the following guidance term:


\begin{align}
\Delta = s\cdot\eta \, \beta(\rvx_t)
\Bigl(\bar{\rvx}_0 - \sum_{i=1}^{K} \rvx_0^i
\frac{p(\rvx_t \mid \rvx_0^i)}
{\sum_{j=1}^{K} p(\rvx_t \mid \rvx_0^j)}\Bigr),
\end{align}

where $\beta(\rvx_t)=\frac{1}{K}\sum_{i=1}^{K} p(\rvx_t \mid \rvx_0^i)$.
The parameters $s$ and $\eta$ control the overall guidance scale,
and $K$ denotes the number of reference samples.
Although many text-to-image models perform such manipulations in the latent space,
the extremely high dimensionality often causes $\beta(\rvx_t)$ to become nearly zero.
Since the paper does not provide official code, the exact implementation details
were unclear. Therefore, we treated the scaling term
$\beta := \eta\,\beta(\rvx_t)$ as a tunable hyperparameter
and adjusted its value in our experiments. We conducted a hyperparameter search over combinations of $(s,K,\beta)$ where
\begin{equation}
    s\in \{1,10\}, \;
    K\in  \{15,25\}, \;
    \beta \in \{0.05,0.1,0.15,0.2,0.25\}  
\end{equation}

The best-performing setting reported in Table~\ref{tab:main5} is $(s,K,\beta)=(1,15,0.05)$.



SR~\cite{kirchhof2025shielded} aims to guide samples away from a given reference set.
In the $\rvx_0$-space, it applies the following guidance term:

\begin{align}
\Delta = s \cdot \sum_{i=1}^{K}
\mathrm{ReLU}\!\left(\frac{r}{\|\bar{\rvx}_0 - \rvx_0^i\|}-1\right)
(\bar{\rvx}_0 - \rvx_0^i).
\end{align}

Here, the parameter $s$ controls the guidance scale.
$K$ denotes the number of reference samples, and $r$ is a radius that encourage the generated sample to move further away
from the ball with radius $r$ centered on reference sample.

We conducted a hyperparameter search over combinations of $(s, K, r)$, where

\begin{equation} 
s \in \{1, 10, 17.5\}, \;
K \in \{5, 10, 15, 20, 25, 30, 35, 40, 45, 50\},  \;
r \in \{50, 75, 100\}. 
\end{equation}

The best-performing setting reported in Table~\ref{tab:main5} is $(s,K,r)=(1,40,100)$.

Both experiments indicate that simply guiding samples away from low-reward regions
has inherent limitations in achieving high-reward generation.
To further support this claim, we conduct a hyperparameter analysis.
Figure~\ref{fig:appendix_SR_abl} illustrates the hyperparameter trends of SR~\cite{kirchhof2025shielded}.
Starting from the best-performing configuration $(s, K, r) = (1, 40, 100)$,
we vary one hyperparameter at a time:

\begin{equation}
s \in \{0.2, 0.4, 0.6, 0.8, 1, 3, 5\}, \, K\in\{5,10,15,20,25,30,35,40\}, \,r \in \{80, 90, 100, 110, 120\}.
\end{equation}

\begin{figure}[ht]
    \centering 
    \begin{subfigure}{0.324\textwidth}
        \centering
        \includegraphics[width=\linewidth]{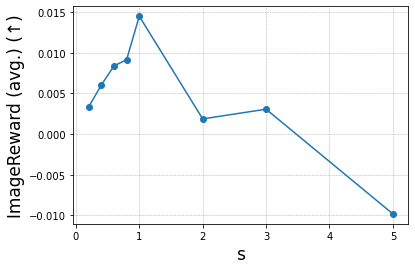}
        \caption{The effect of varying $s$ on ImageReward.}
        \label{fig:sub3}
    \end{subfigure}
    \hfill
    \begin{subfigure}{0.324\textwidth}
        \centering
        \includegraphics[width=\linewidth]{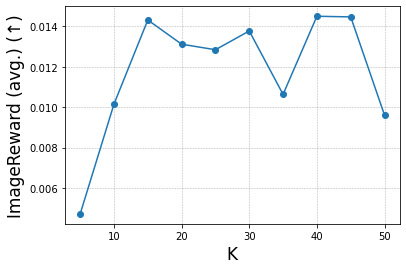}
        \caption{The effect of varying $K$ on ImageReward.}
        \label{fig:sub1}
    \end{subfigure}
    \hfill 
    \begin{subfigure}{0.324\textwidth}
        \centering
        \includegraphics[width=\linewidth]{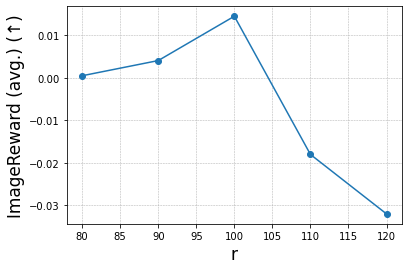}
        \caption{The effect of varying $r$ on ImageReward.}
        \label{fig:sub2}
    \end{subfigure}
    
    \caption{Analysis of ImageReward trends by varying a single parameter of SR~\cite{kirchhof2025shielded} while others are fixed.}
    \label{fig:appendix_SR_abl}
\end{figure}

For the guidance strength parameter $s$, Increasing $s$ initially improves the reward, but overly strong guidance
pushes samples outside the data manifold, resulting in degraded performance as shown in Figure~\ref{fig:sub3}.

Figure~\ref{fig:sub1} shows that using too small number of reference samples $K$ it cannot helps the model escape low-reward regions.

When varying the radius $r$, increasing $r$ allows the model to escape low-reward
regions more effectively, consistent with Figure~\ref{fig:sub2}.
Nevertheless, when $r$ becomes excessively large, the generated samples may drift
away from the data manifold, leading to a decrease in reward.

Overall, these observations demonstrate that guidance strategies that solely
encourage moving away from low-reward samples are not sufficient to achieve
high-reward sampling.

\subsection{Evaluation Metrics}

We measure the following metrics across all experiments on 553 GenEval prompts~\cite{ghosh2023geneval}.

\noindent\textbf{ImageReward}~\cite{xu2023imagereward}:
To assess the overall quality from a human-centric perspective, we utilize ImageReward. Unlike standard metrics, this learned model is trained on large-scale human preference data to output a scalar score representing the generated image's desirability. It effectively balances prompt relevance with visual aesthetics, capturing the general human consensus on image quality rather than focusing on a single dimension of fidelity.

\noindent\textbf{CLIP Score}~\cite{radford2021learning}:
We employ CLIP Score to measure the high-level semantic correspondence between the text prompt and the generated image. By calculating the cosine similarity between their respective embeddings, this metric evaluates global semantic alignment without requiring reference images. However, it is generally less effective at distinguishing fine-grained details, such as precise object counts, spatial arrangements, or complex attribute binding.

\noindent\textbf{HPS (Human Preference Score)}~\cite{wu2023humanpreferencescorev2}:
HPS is adopted to evaluate the generation quality based on learned human preferences. By training a vision--language backbone on pairwise comparisons, this metric predicts which image humans are more likely to favor. It prioritizes \emph{subjective preference} and \emph{overall perceptual quality}, reflecting how aesthetic appeal and prompt satisfaction interact, rather than purely measuring semantic similarity at the embedding level.

\noindent\textbf{GenEval}~\cite{ghosh2023geneval}:
For a rigorous assessment of structural accuracy, we use GenEval. This object-centric protocol employs detection models to verify whether the entities and attributes specified in the prompt are correctly generated. By focusing on \emph{compositional correctness}—such as object presence, counts, and spatial relations—GenEval serves as a crucial diagnostic tool for complex instruction-following failures that coarse semantic metrics often overlook.

\subsection{How to Calculate EFR Approximation Error in \Cref{fig:7}}

We define the \emph{Expected Future Reward} (EFR) in
Eq.~\ref{eq:process_target} as

\begin{equation}
\efr(\rvx_t,\rvc)
:= \log\mathbb{E}_{p_{\theta}(\rvx_0 \mid \rvx_t,\rvc)}
\left[
\exp\left(\lambda \cdot r(\rvx_0,\rvc)\right)
\right].
\end{equation}

Using Theorem~\ref{thma} and Monte-Carlo approximation, this quantity can be reformulated as

\begin{align}
\efr(\rvx_t,\rvc)
&= \log \mathbb{E}_{p_{\theta}(\rvx_0\mid\rvc)}
\left[
\frac{p(\rvx_t\mid \rvx_0)}
{\mathbb{E}_{p_{\theta}(\rvx_0\mid\rvc)}[p(\rvx_t\mid \rvx_0)]}
\exp\left(\lambda \cdot r(\rvx_0,\rvc)\right)
\right] \label{eq:EFR_reform}\\
&\approx
\log \frac{1}{N}
\sum_{i=1}^{N}
\left[
\frac{p(\rvx_t\mid \rvx_0^i)}
{\frac{1}{N}\sum_{j=1}^{N} p(\rvx_t\mid \rvx_0^j)}
\exp\left(\lambda \cdot r(\rvx_0^i,\rvc)\right)
\right]
\label{eq:EFR_MC},
\end{align}

where $\rvx_0^i \sim p_{\theta}(\rvx_0\mid\rvc)$.
We refer to this Monte-Carlo estimate as $\text{EFR}_{\text{True}}(\rvx_t,t)$,
which is used for evaluation.
For this approximation, we generate $N=100$ samples using a DDPM solver
with 100 denoising steps, consistent with Table~\ref{tab:main2}.

Gradient-guidance approaches~\cite{bansal2024universal} approximate the EFR via a first-order Taylor expansion
around the posterior mean, yielding

\begin{equation}
\efr(\rvx_t,\rvc) \approx r(\bar{\rvx}_0,c),
\end{equation}

where $\bar{\rvx}_0:=\mathbb{E}_{\rvx_0\sim p_{\theta}(\rvx_0|\rvx_t,c)}[\rvx_0]$.
We denote this approximation as $\text{EFR}_{\text{Grad}}(\rvx_t,t)$.

LiDAR approximates the EFR using the same Monte-Carlo form in
Eq.~\ref{eq:EFR_MC}, but draws samples from $\rvx_0^i \sim q(\rvx_0\mid\rvc)$,
where $q$ denotes the distribution induced by a fast generator that produces
lookahead samples. In this approximation, we generate $N=800$ lookahead samples using DPM-5 solver.
We denote this approximation as $\text{EFR}_{\text{LiDAR}}(\rvx_t,t)$.

We generate $K$ sample trajectories along diffusion timesteps $T$ with LiDAR guidance and evaluate
$\text{EFR}_{\text{Grad}}(\rvx_t^k,t)$ and $\text{EFR}_{\text{LiDAR}}(\rvx_t^k,t)$
along each trajectory.
We then compute the accumulated approximation error of $\text{EFR}_{\text{Approx}}(\rvx_t^k,t) \in\{\text{EFR}_{\text{Grad}}(\rvx_t^k,t),\text{EFR}_{\text{LiDAR}}(\rvx_t^k,t)\}$ with Mean Squared Error (MSE):
\begin{equation}
    \frac{1}{K}\sum_{k=1}^{K}\sum_{t=0}^{T} \left( \text{EFR}_{\text{True}}(\rvx_t^k,t) - \text{EFR}_{\text{Approx}}(\rvx_t^k,t) \right)^2
\end{equation}

In this experiment, we use $T=100$, and $K=20$.
In Figure~\ref{fig:7}, we examine how the accumulated approximation error changes
as we vary $\lambda$.

As discussed in Appendix~\ref{sec:A.5}, the error introduced by the Taylor
approximation grows with $\lambda$. Therefore, under gradient guidance, the EFR approximation error consistently
increases as $\lambda$ becomes larger. In contrast, our method does not suffer from Taylor approximation error and only
incurs lookahead sampling error.
As a result, our EFR approximation remains robust even when $\lambda$ increases.

\section{Related Work}

\textbf{Formulation of expected future reward:}
Reward-tilted sampling has been studied from multiple perspectives. The reward-tilted distribution arises as the optimal policy of KL-regularized reinforcement learning, which underlies RLHF~\cite{ouyang2022training} and related approaches such as DPO~\cite{rafailov2023direct,kim2025preference}. In diffusion models, realizing this target distribution requires characterizing the marginal distribution at each timestep. ELEGANT~\cite{uehara2024fine} derives the intermediate marginal distributions that lead to the reward-tilted target distribution at $t=0$, while AM~\cite{domingo-enrich2025adjoint} provides a stochastic optimal control interpretation of the resulting formulation. FK-Steering~\cite{singhal2025a} further derives a general discrete-time formulation of the corresponding marginal distributions.

\textbf{How to compute expected future reward:}
This line of work focuses on efficiently computing the expected future reward. \textit{Backward rollout} is generally infeasible due to its high computational cost. GlassFlow~\cite{holderrieth2026glass} partially alleviates this issue by replacing stochastic sampling with ODE sampling, while MFM~\cite{potaptchik2026meta} further improves efficiency by replacing the rollout sampler with a distilled model. TILT~\cite{potaptchik2025tilt} additionally proposes a backpropagation-free guidance method within the backward rollout framework. In contrast, LiDAR reformulates expected future reward computation through \textit{forward rollout} rather than backward rollout. This reformulation naturally enables ODE-based lookahead sampling, distilled lookahead sampling, and backpropagation-free guidance within a unified framework. Another line of work relies on Tweedie's formula~\cite{chung2023diffusion} to improve computational efficiency, but such approaches introduce approximation bias. Moreover, because no derivative-free guidance method is known under the Tweedie-based formulation, additional sampling strategies such as gradient guidance~\cite{bansal2024universal,na2025diffusion} and SMC~\cite{singhal2025a,li2025derivativefree} have been developed, each introducing its own limitations.



\newpage
\section{Additional Experimental Results}
\label{sec:c}

\subsection{Efficiency Analysis}
\Cref{tab:cost_breakdown} presents a component-wise breakdown of the computational cost of LiDAR. Note that the cost of target sampling in \cref{alg:2} is largely identical to that of vanilla sampling, as illustrated in \Cref{fig:8}. Using up to $n<800$ lookahead samples incurs nearly identical memory and runtime costs, consistent with observations in Safe-D~\cite{kim2025trainingfree}.

\begin{figure*}[h]
\setlength{\tabcolsep}{4pt}
\centering
\begin{minipage}{0.36\textwidth}
    \centering
    \captionof{table}{Computational cost for each component for LiDAR (DPM-5 / $n$=50) in \cref{tab:main2}.}
    \begin{tabular}{l l}
        \toprule
        Component & Time (sec.) \\
        \midrule
        Lookahead sampling & 5.69 \\   
        Reward annotation & 0.65 \\   
        Target sampling & 7.07 \\ 
        \midrule
        Total & 13.41 \\ 
        \bottomrule
    \end{tabular} 
    \label{tab:cost_breakdown}
\end{minipage}
\hfill
\begin{minipage}{0.63\textwidth}
    \centering
    \includegraphics[height=3.5cm]{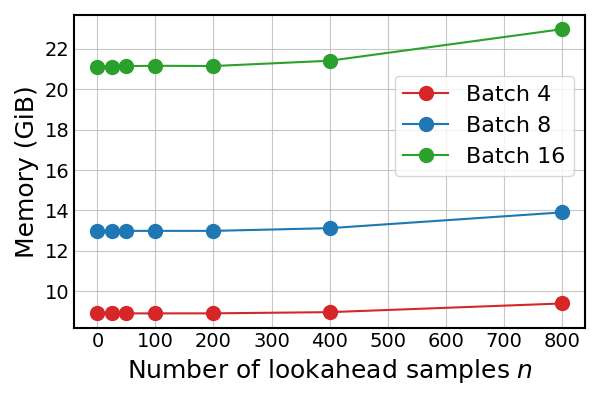}
    \hfill
    \includegraphics[height=3.5cm]{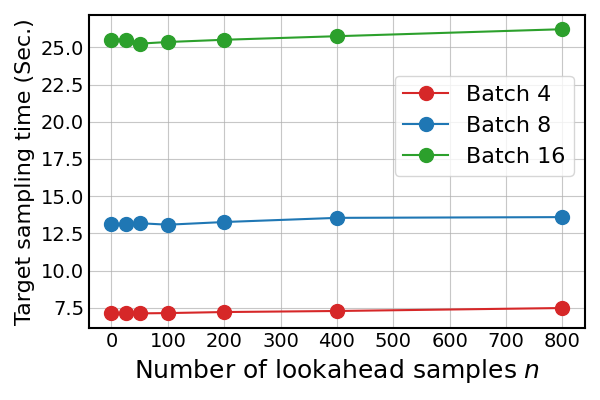}
    \captionof{figure}{The inference cost for target sampling (\cref{alg:2}) according to the number of lookahead samples $n$ for SD v1.5.}
    \label{fig:8}
\end{minipage}
\end{figure*}

\subsection{Robust Scaling Behavior with respect to $\delta$}
\Cref{fig:9} shows that LiDAR exhibits robust scaling behavior with respect to the number of steps $\delta$ in the lookahead solver. Smaller values of $\delta$ incur larger lookahead error but permit a greater number of lookahead samples $n$ under the same computational budget, and vice versa. This trade-off results in nearly identical scaling behavior across different choices of $\delta$. We observe that the DPM-5 lookahead solver performs slightly better on SD v1.5; therefore, we adopt it as the default choice.

\begin{figure}[h]
    \centering
    \begin{minipage}{0.28\textwidth}
        \centering
        \includegraphics[width=\linewidth]{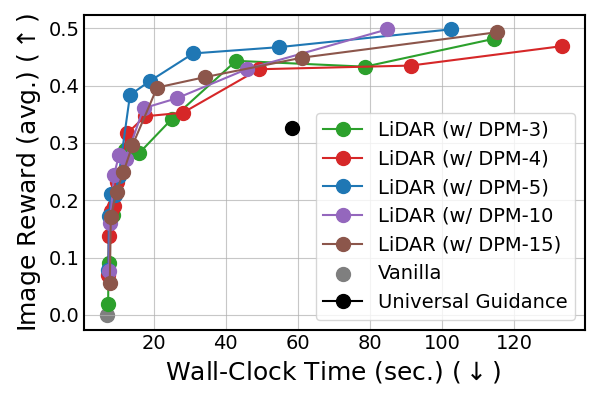}
        \caption{Wall-clock time for various lookahead solvers and numbers of lookahead samples on SD v1.5.}
        \label{fig:9}
    \end{minipage}
    \hfill
    \begin{minipage}{0.7\textwidth}
        \centering
        \begin{minipage}{0.49\textwidth}
            \centering
            \includegraphics[width=\linewidth]{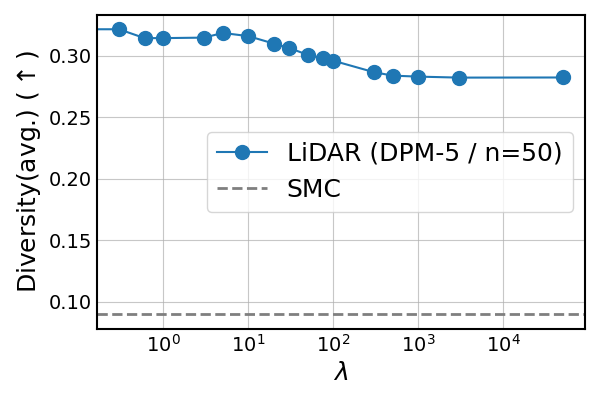}
        \end{minipage}
        \hfill
        \begin{minipage}{0.49\textwidth}
            \centering
            \includegraphics[width=\linewidth]{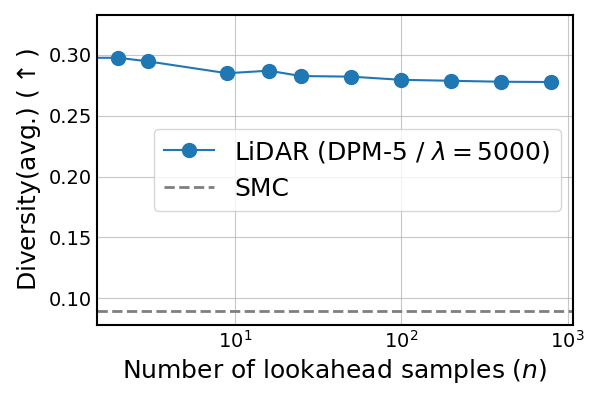}
        \end{minipage}
        \caption{Diversity as a function of $\lambda$ and $n$ for SD v1.5.}
        \label{fig:10}
    \end{minipage}
\end{figure}

\subsection{Diversity Analysis}

Larger values of $\lambda$ consistently improve reward alignment performance, as shown in \Cref{fig:7}, but slightly reduce diversity, as illustrated in \Cref{fig:10}. Diversity is measured using pairwise similarity in the CLIP embedding space, following the protocol of FK.~\footnote{\url{https://github.com/zacharyhorvitz/Fk-Diffusion-Steering/blob/main/text_to_image/fkd_diffusers/rewards.py}}
 The left panel of \Cref{fig:10} indicates that this reduction in diversity is minimal compared to the issues observed in SMC~\cite{singhal2025a}. Moreover, all LiDAR-generated results in \Cref{fig:compare} use the same value of $\lambda=5000$ and still maintain substantial diversity.

Another factor influencing diversity is the number of lookahead samples $n$. Larger values of $n$ similarly improve reward alignment, as shown in \Cref{fig:1}, but also lead to reduced diversity, as illustrated in the right panel of \Cref{fig:10}. The diversity difference between LiDAR (DPM-3, $n=3$) and LiDAR (DPM-5, $n=50$) in \Cref{fig:compare} primarily arises from this difference in $n$. This effect may stem from the need to avoid a larger number of low-reward lookahead samples, which restricts the feasible sampling region. Nevertheless, LiDAR continues to generate diverse samples, in contrast to the limitations observed in SMC. Finally, we note that fidelity and diversity are fundamentally traded off in generative modeling, and LiDAR provides effective control through the choice of $\lambda$ and $n$.

\subsection{SDXL with DDIM 50 Steps}

\Cref{tab:SDXL_DDIM} shows results using the SDXL backbone with a DDIM-50 target sampler. We observe that UG and DATE exhibit more pronounced performance degradation, whereas LiDAR experiences a smaller performance drop compared to the DDPM-100 step sampler.

\begin{table*}[h]
    \centering
    \caption{All information is provided in \Cref{tab:main2}}
    \setlength{\tabcolsep}{5pt} 
    \begin{tabular}{clcccccc}
        \toprule
        \multirow{2}{*}{\textbf{Backbone}} & \multirow{2}{*}{\textbf{Guidance Method}} & \multicolumn{4}{c}{\textbf{Performance} ($\uparrow$)} & \multicolumn{2}{c}{\textbf{Inference Cost} ($\downarrow$)} \\
        \cmidrule(lr){3-6} \cmidrule(lr){7-8}
        & & ImageReward & CLIP Score & HPS & GenEval & Time{\scriptsize~(sec.)} & Mem.{\scriptsize~(GiB)} \\
        \midrule
        \midrule
        \multirow{13}{*}{\makecell{SDXL\\ w/ DDIM\\ (50 step solver)}} 
        & Vanilla & 0.538 & 0.279 & 0.278 & 0.510 & 21.45 & 33.84 \\
        \cmidrule(lr){2-8}
        & Powerful Solver (200 step) & 0.586 & 0.279 & 0.283 & 0.517 & 83.14 & 33.84 \\
        & Universal Guidance & 0.643 & 0.277 & 0.277 & 0.512 & 169.33 & OOM* \\
        & DATE & 0.722 & 0.280 & 0.281 & 0.539 & 138.47 & OOM* \\
        \cmidrule(lr){2-8}
        & \cellcolor{gray!25}LiDAR (DPM-8 / $n$=3) & \cellcolor{gray!25}0.749 & \cellcolor{gray!25}0.282 & \cellcolor{gray!25}0.287 & \cellcolor{gray!25}0.530 & \cellcolor{gray!25}24.78 & \cellcolor{gray!25}\textbf{33.84} \\
        & \cellcolor{gray!25}LiDAR (DPM-8 / $n$=9) & \cellcolor{gray!25}0.824 & \cellcolor{gray!25}0.282 & \cellcolor{gray!25}0.291 & \cellcolor{gray!25}0.551 & \cellcolor{gray!25}31.49 & \cellcolor{gray!25}\textbf{33.84} \\
        & \cellcolor{gray!25}LiDAR (DPM-8 / $n$=50) & \cellcolor{gray!25}0.950 & \cellcolor{gray!25}0.284 & \cellcolor{gray!25}0.293 & \cellcolor{gray!25}0.585 & \cellcolor{gray!25}77.44 & \cellcolor{gray!25}\textbf{33.84} \\
        & \cellcolor{gray!25}LiDAR (LCM-4 / $n$=3) & \cellcolor{gray!25}0.729 & \cellcolor{gray!25}0.281 & \cellcolor{gray!25}0.288 & \cellcolor{gray!25}0.544 & \cellcolor{gray!25}23.03 & \cellcolor{gray!25}\textbf{33.84} \\
        & \cellcolor{gray!25}LiDAR (LCM-4 / $n$=16) & \cellcolor{gray!25}0.820 & \cellcolor{gray!25}0.283 & \cellcolor{gray!25}0.291 & \cellcolor{gray!25}0.574 & \cellcolor{gray!25}29.60 & \cellcolor{gray!25}\textbf{33.84} \\
        & \cellcolor{gray!25}LiDAR (LCM-4 / $n$=100) & \cellcolor{gray!25}0.935 & \cellcolor{gray!25}\textbf{0.285} & \cellcolor{gray!25}0.294 & \cellcolor{gray!25}\textbf{0.590} & \cellcolor{gray!25}72.63 & \cellcolor{gray!25}\textbf{33.84} \\
        & \cellcolor{gray!25}LiDAR (DMD-1 / $n$=3) & \cellcolor{gray!25}0.706 & \cellcolor{gray!25}0.281 & \cellcolor{gray!25}0.287 & \cellcolor{gray!25}0.539 & \cellcolor{gray!25}\textbf{22.57} & \cellcolor{gray!25}\textbf{33.84} \\
        & \cellcolor{gray!25}LiDAR (DMD-1 / $n$=16) & \cellcolor{gray!25}0.850 & \cellcolor{gray!25}0.283 & \cellcolor{gray!25}0.294 & \cellcolor{gray!25}0.576 & \cellcolor{gray!25}27.27 & \cellcolor{gray!25}\textbf{33.84} \\
        & \cellcolor{gray!25}LiDAR (DMD-1 / $n$=100) & \cellcolor{gray!25}\textbf{0.954} & \cellcolor{gray!25}0.284 & \cellcolor{gray!25}\textbf{0.297} & \cellcolor{gray!25}0.588 & \cellcolor{gray!25}58.12 & \cellcolor{gray!25}\textbf{33.84} \\
        \bottomrule
    \end{tabular}
    \label{tab:SDXL_DDIM}
\end{table*}

\begin{figure}[h]
  \centering
  \includegraphics[width=0.4\linewidth]{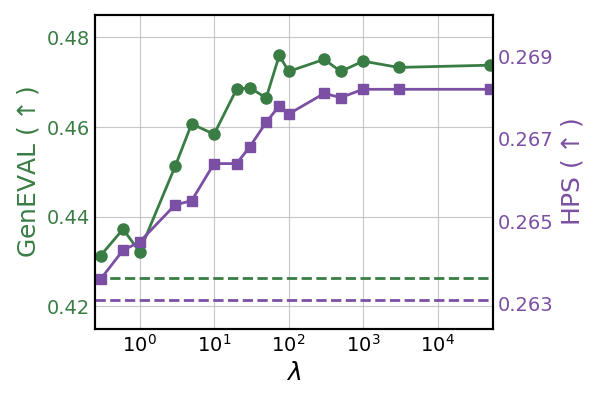}
  \caption{$\lambda$ ablations on HPS and GenEval.}
  \label{fig:lambda_geneval}
\end{figure}

\subsection{Sample Comparison}

\Cref{fig:sample_comparison} shows additional samples generated for other prompts.

    
    
    
\begin{figure*}[t]
    \centering
    \includegraphics[width=\linewidth]{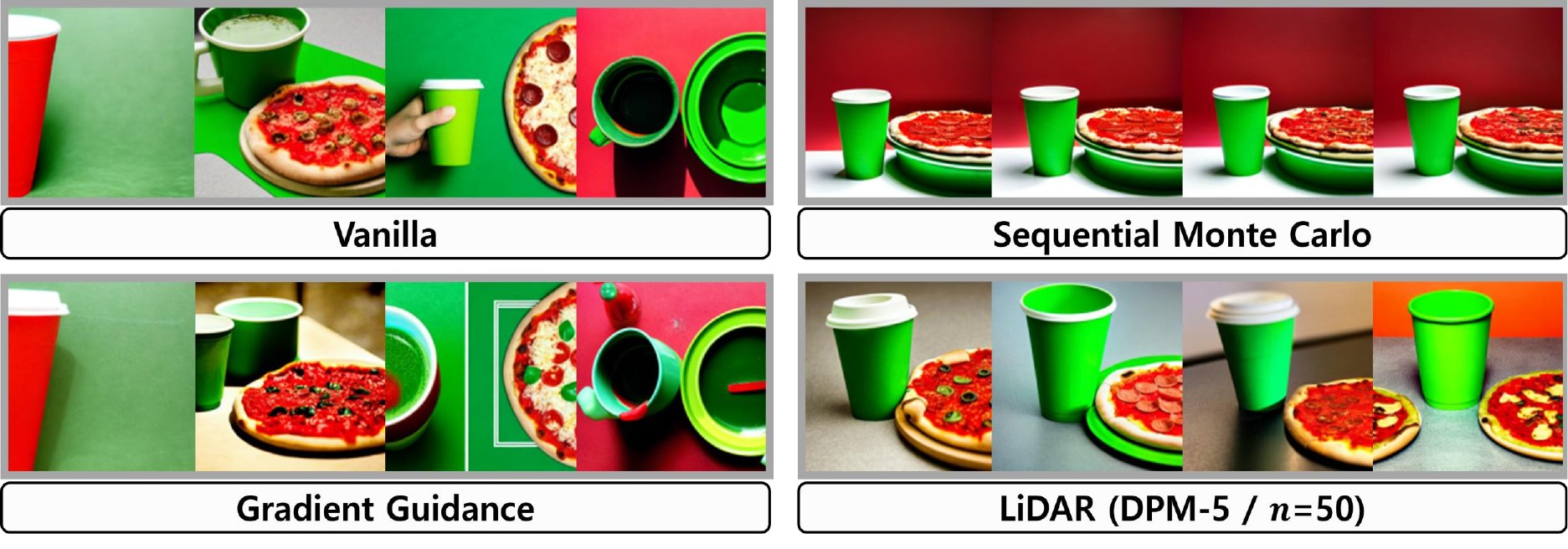}
    
    \vspace{1mm} 
    
    {\small \textbf{Prompt:} \textit{a photo of a green cup and a red pizza}}
    
    \vspace{1cm} 

    \includegraphics[width=\linewidth]{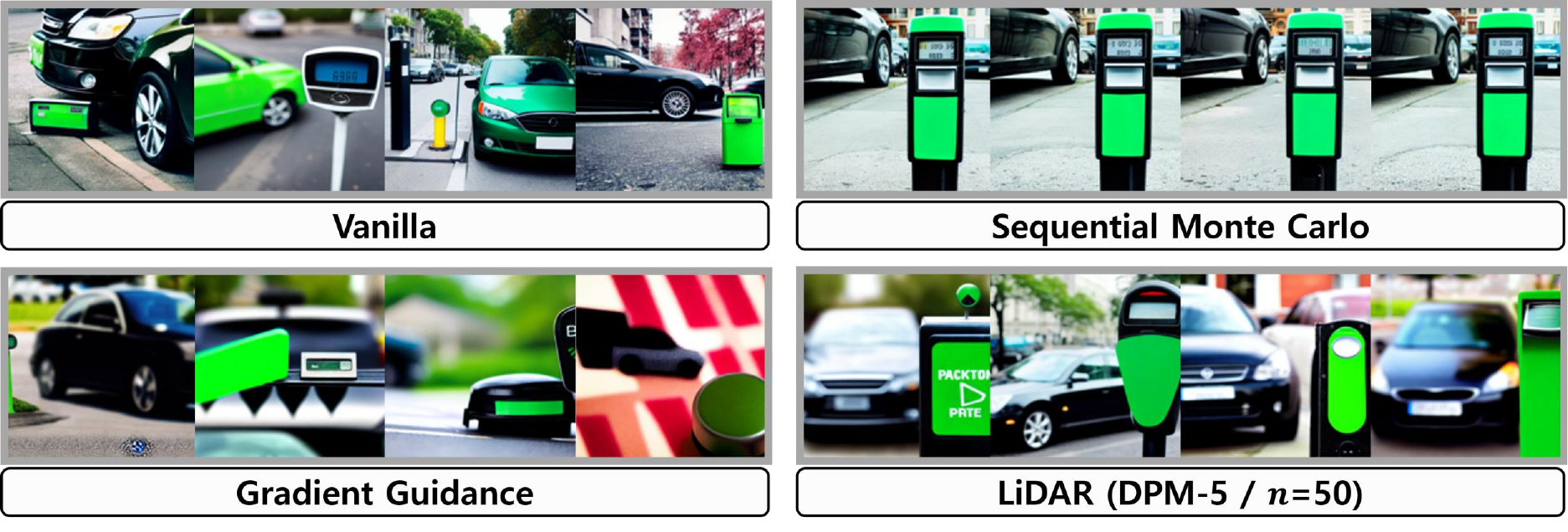}
    
    \vspace{1mm} 
    
    {\small \textbf{Prompt:} \textit{a photo of a black car and a green parking meter}}
    
    \vspace{2mm} 
    \caption{Visual comparison of generated samples with different prompts using SD v1.5.}
    \label{fig:sample_comparison}
\end{figure*}


\setlength{\tabcolsep}{3pt}
\renewcommand{\arraystretch}{1.0}

\end{document}